\theoremstyle{plain}
\newtheorem{theorem}{Theorem}[section]
\newtheorem{definition}[theorem]{Definition}
\newcommand{\wt}{\widetilde}
\newcommand{\E}{\mathbb{E}}
\newcommand{\R}{\mathbb{R}}
\definecolor{iccvblue}{rgb}{0.21,0.49,0.74}
\title{Discriminator-Free Direct Preference Optimization for Video Diffusion}
\author{
Haoran Cheng\textsuperscript{1}, 
Qide Dong\textsuperscript{2}, 
Liang Peng\textsuperscript{1}, 
Zhizhou Sha\textsuperscript{3},
Weiguo Feng\textsuperscript{2}, \\
Jinghui Xie\textsuperscript{2}, 
Zhao Song\textsuperscript{2}, 
Shilei Wen\textsuperscript{2}, 
Xiaofei He\textsuperscript{1}, 
Boxi Wu\textsuperscript{1}\\
\vspace{-10pt} \\
\textsuperscript{1}Zhejiang University, 
\textsuperscript{2}Bytedance, 
\textsuperscript{3}Tsinghua University\\
}
\begin{document}

\maketitle

\begin{abstract}
Direct Preference Optimization (DPO), which aligns models with human preferences through win/lose data pairs, has achieved remarkable success in language and image generation.
However, applying DPO to video diffusion models faces critical challenges:
(1) Data inefficiency—generating thousands of videos per DPO iteration incurs prohibitive costs;
(2) Evaluation uncertainty—human annotations suffer from subjective bias, and automated discriminator fail to detect subtle temporal artifacts like flickering or motion incoherence.
To address these, we propose a discriminator-free video DPO framework that:
(1) Uses original real videos as \textit{win cases} and their edited versions (e.g., reversed, shuffled, or noise-corrupted clips) as \textit{lose cases};
(2) Trains video diffusion models to distinguish and avoid artifacts introduced by editing.
This approach eliminates the need for costly synthetic video comparisons, 
provides unambiguous quality signals, 
and enables unlimited training data expansion through simple editing operations.
We theoretically prove the framework’s effectiveness even when real videos and model-generated videos follow different distributions.
Experiments on CogVideoX demonstrate the efficiency of the proposed method.
\end{abstract}
\section{Introduction}

Direct Preference Optimization (DPO)~\cite{dpo}, which leverages win/lose paired data to align model outputs with human preferences, has demonstrated remarkable success in LLMs\citep{ethayarajh2024kto, azar2024general} and text-to-image generation~\cite{DiffusionDPO, HumanFeedbackDiffusion, li2024aligning, zhu2025dspo}. 
Recent advances in video diffusion models have spurred interest in adapting DPO to video generation~\cite{stepvideot2v, videodpo, onlinevpo}. 
However, existing approaches face significant challenges in practicality and scalability.

As illustrated in Fig.~\ref{fig:pipeline}-(a), 
the DPO pipeline for video diffusion operates by first synthesizing outputs through model inference, then employing a preference discriminator to evaluate and rank these outputs based on human-aligned quality metrics. 
This process arises with two primary obstacles: 
First, \textbf{high computational costs}—generating thousands of videos per DPO iteration is prohibitively expensive (e.g., 550 seconds per 720P video for CogVideoX~\cite{yang2024cogvideox} on NVIDIA H100);
Second,  \textbf{preference discrimination} struggles with unreliable evaluation.
Human annotators often struggle with inconsistent standards for subjective video quality assessment, 
while automated methods face difficulties in consistently distinguishing subtle artifacts across video pairs. 
This discrimination challenge is further exacerbated by the narrow quality margins typically observed in generated videos, making reliable preference judgments particularly complex.

\begin{figure}[t]
  \centering
   \includegraphics[width=1\linewidth]{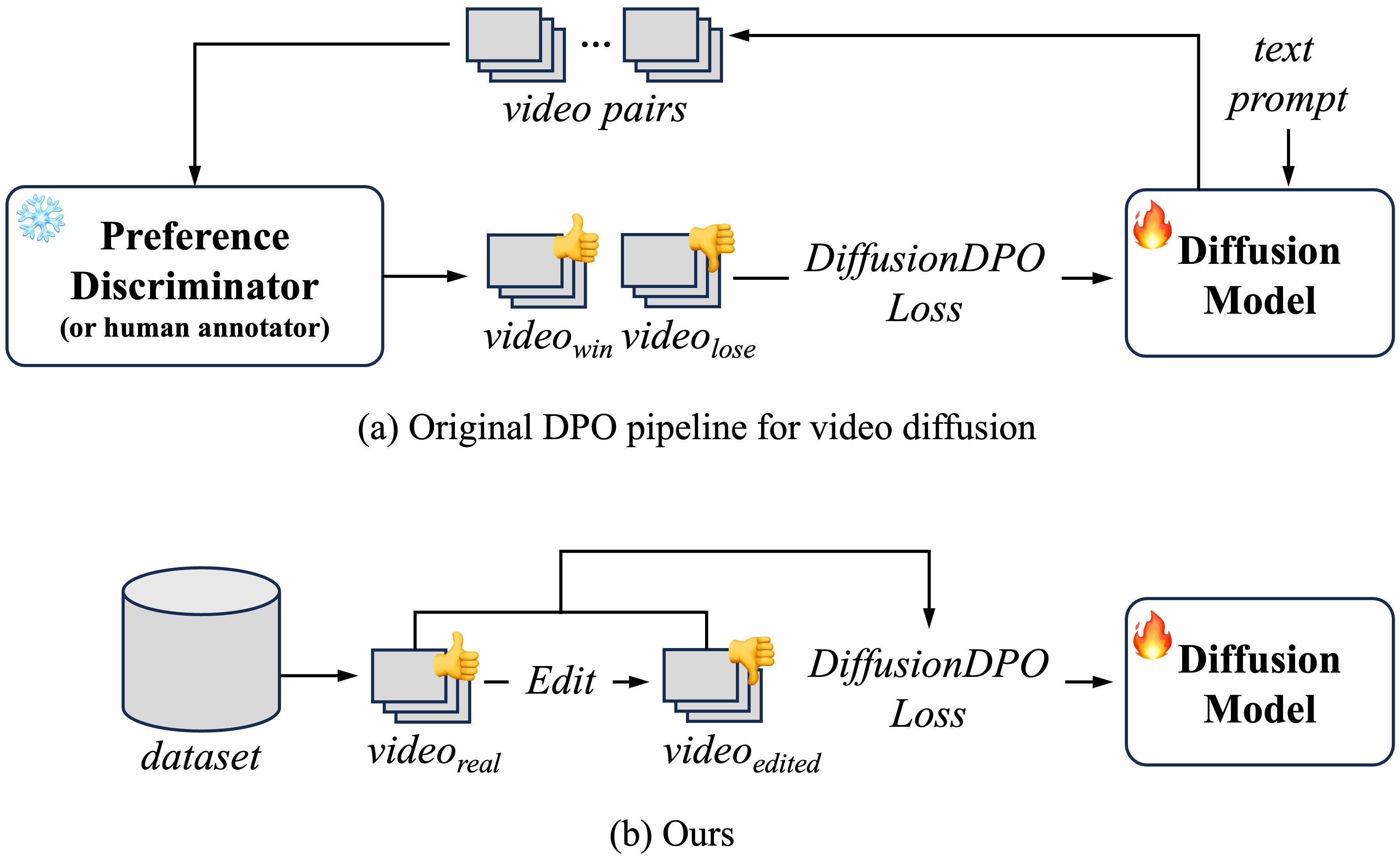}

   \caption{
   \textbf{Comparison between DPO and our proposed framework.} 
   Traditional DPO relies on computationally expensive generated video pairs, which suffer from ambiguous quality margins and scalability issues. Our method replaces generated pairs with real\&edited video pairs, where edited videos serve as lose cases, and original real videos act as win cases. This approach eliminates generative overhead, provides explicit preference signals, and enables infinite scalability.
   }
   \label{fig:pipeline}
\end{figure}

To address these challenges, we propose a novel Discriminator-Free DPO (DF-DPO) format that replaces generated video pairs with real\&edited video pairs, as illustrated in Fig.~\ref{fig:pipeline}-(b). 
Edited videos (e.g., reversed playback, frame-shuffled, or noise-corrupted real videos) serve as lose cases, while original real videos act as win cases. 
This approach offers three advantages: 
(1) Cost efficiency—real\&edited pairs eliminate generative overhead; 
(2) Explicit preference signals—editing directly introduces artifacts that models must avoid; 
(3) Infinite scalability—editing operations enable rapid dataset expansion. 

While standard DPO assumes alignment between training data and model-generated distributions, existing DPO implementations for diffusion models like DiffusionDPO~\cite{DiffusionDPO} empirically violate this principle by employing external datasets (e.g., Pick-a-Pic~\cite{pickapic}) where training and generation distributions diverge, which may cause reward misalignment and excessive regularization.
We perform analysis in Chapter~\ref{sec:theoretical_analysis}, establishing theoretical safeguards against these issues, validating our real\&edited pair paradigm as both practical and principled.

We implement our method on CogVideoX and compare it against supervised fine-tuning (SFT). Experiments demonstrate superior alignment with human preferences, validating our framework's efficacy. 

In summary, our contributions are: 
\begin{enumerate}
    \item We propose a video DPO framework using real/edited video pairs, eliminating costly generated data and ambiguous preference labels.

    \item We establish that DPO remains effective with cross-distribution training data, theoretically bridging real and generated video domains.

    \item We demonstrate the superiority of our approach through systematic comparisons with supervised fine-tuning (SFT) baselines on CogVideoX, achieving significant improvements in human preference alignment.
\end{enumerate}
\section{Related works}

\subsection{Video Diffusion Models}

The rise of diffusion models~\cite{diffusionbeatgan, Ho2020DDPM, ldm} has significantly advanced text-to-video tasks.
Some approaches~\cite{vdm, videoldm, makeavideo} inflate pre-trained T2I models by adding spatial-temporal 3D convolutions.
Several works~\cite{svd, videocrafter2} demonstrate a data-centric perspective technique to enhance the performance of T2V models.

Recent advances in generative modeling have spurred significant progress in video generation, driven by both commercial and open‐source research efforts. Commercial systems such as Sora~\cite{sora2024sora}, Gen-3~\cite{gen3runwayml}, Veo2~\cite{veo2024veo}, Kling~\cite{kling2024kling}, and Hailuo~\cite{hailuo2024hailuo} demonstrate impressive text-to-video capabilities along with extensions to image-to-video synthesis and specialized visual effects. These systems, however, typically rely on intricate pipelines with extensive pre- and post-processing. In contrast, open-source approaches like HunyuanVideo\cite{kong2024hunyuanvideo}, CogVideoX\cite{yang2024cogvideox}, Open-Sora\cite{opensora}, Open-Sora-Plan\cite{opensoraplan} and StepVideoT2V\cite{stepvideot2v} are built on transparent architectures—ranging from variations of full-attention Transformers to adaptations of DiT frameworks \cite{dit}—which not only foster community engagement but also facilitate reproducible research.

\subsection{RLHF in Generative Models}
Aligning generative models with human preferences has been a central theme in the evolution of large language models (LLMs) through techniques such as Reinforcement Learning from Human Feedback (RLHF) \citep{akrour2011preference, christiano2017deep, dubois2023alpacafarm, dubey2024llama, stiennon2020learning}. Although similar strategies have been applied to text-to-image diffusion models—leveraging supervised fine-tuning with preference data \citep{podell2023sdxl, dong2023raft, wu2023human} and reward model-based optimization \citep{clark2023directly, prabhudesai2023aligning, hao2023optimizing}—the direct adaptation of these methods to video diffusion is less explored. 

Recently, Direct Preference Optimization (DPO) \cite{dpo} has emerged as an alternative to RLHF, bypassing the need for a separate reward model training phase by directly fine-tuning the generative model with preference data. While DPO and its variants have been successfully applied in LLMs \citep{ethayarajh2024kto, azar2024general}, text-to-image diffusion models \cite{DiffusionDPO, HumanFeedbackDiffusion, li2024aligning, zhu2025dspo}, and video diffusion models \cite{stepvideot2v, videodpo, onlinevpo}.

\section{Preliminaries}

\subsection{Diffusion Models}
For diffusion models, visual contents are generated by transforming an initial noise to the desired sample through multiple sequential steps~\cite{Ho2020DDPM}. 
It is a Markov chain process where the model continually denoises the initial noise vector $\mathbf{x}_T$ and finally generates a sample $\mathbf{x}_0$. 
The generation step from $\mathbf{x}_t$ to $\mathbf{x}_{t-1}$ is given by:
\begin{equation}
    \mathbf{x}_t \sim q(\mathbf{x}_t|\mathbf{x}_{t-1}) = \mathcal{N}(\mathbf{x}_t; \sqrt{\alpha_t} \mathbf{x}_{t-1}, \beta_t \mathbf{I}),
\end{equation}
where $\beta_t$ is the variance schedule, determining the amount of noise added at each timestep $t$. $\alpha_t$ is a parameter obtained by $\alpha_t = 1 - \beta_t$ which represents the proportion of the original data retained.

The denoising model $\epsilon_\theta$, which learns to predict the noise added to $\mathbf{x}_0$ for timestep $t$, is trained by minimizing the loss between the ground-truth $\epsilon$ and prediction. The loss function is defined as
\begin{equation}
\label{eq:sftloss}
    L_d(\theta) = \mathbb{E}_{t, \mathbf{x}_0, \epsilon} \left[ \left\| \epsilon - \epsilon_\theta \left( \sqrt{\alpha_t} \mathbf{x}_0 + \sqrt{1 - \alpha_t} \epsilon, t \right) \right\|^2 \right],
\end{equation}
where $\epsilon$ is the noise added in the forward process, and $\bar{\alpha}_t$ is the cumulative product of $\alpha_t$ up to timestep $t$.


\subsection{Direct Preference Optimization}
Direct Preference Optimization~\cite{dpo} is a technique used to align generative models with human preferences. Training on pairs of generated samples with positive and negative labels, the model learns to generate positive samples with higher probability and negative samples with lower probability. DiffusionDPO adapts DPO for text-to-image diffusion models. The loss function provided in the~\cite{DiffusionDPO} is defined as:
\begin{equation}
    L_{\text{DPO}}(x^W, x^L, c) = L(x^W, p) - L(x^L, p),
\end{equation}
where $x^W$ and $x^L$ represent positive and negative samples, respectively. $L(x^W, p)$ and $L(x^L, p)$ are losses for positive and negative parts, encouraging the model to generate samples closer to preferences.
\section{Theoretical Analysis}
\label{sec:theoretical_analysis}

The foundational premise of DPO relies on an implicit assumption: the preference pairs used for training should align with the model's current generative distribution. 
However, existing implementations for diffusion models (e.g., DiffusionDPO~\cite{DiffusionDPO}) adopt a critical deviation by training on external datasets like Pick-a-Pic~\cite{pickapic}, where the training distribution inherently diverges from the model's generated outputs. 
This discrepancy raises fundamental questions about the method's theoretical validity, as distribution mismatch may induce reward miscalibration and ungrounded regularization effects.
Therefore, in this section, 
In this section, we present a theoretical analysis establishing theoretical safeguards against these issues mentioned above.
Specifically, 
Section~\ref{sec:theory:optimal_policy_guarantees} shows the objective can tell the advantage policy. 
Section~\ref{sec:theory:modeling_human_preference} demonstrates our algorithm can model human preference. 
Section~\ref{sec:theory:optmal_policy_for_video_dpo} presents the close-form of the optimal policy. 
Section~\ref{sec:theory:offsset_partition} discusses offsetting the partition function. 
For more detailed analysis, please refer to Appendix~\ref{sec:app:proof_details}.

\subsection{Optimal Policy Guarantees} \label{sec:theory:optimal_policy_guarantees}

Before delving into the theoretical details, we first outline the high-level intuition behind our analysis. The video generation process can be framed as a sequential generation task. At each timestep $t$, given a condition (or user prompt) $c$, the model generates the current frame $x^t$ conditioned on $c$ and the preceding frames $x^{<t}$. Consequently, the well-known Direct Preference Optimization (DPO) algorithm can be applied to optimize the video generation process.
We begin by introducing several key functions: the state-action function, the value function, and the advantage function, which play a central role in our subsequent proofs.
\begin{definition} [State-action function, value function, and advantage function] \label{def:three_essential_function}
If the following conditions hold:
\begin{itemize}
    \item Let $\pi$ denote a policy.
    \item Let $\gamma \in (0, 1)$ denote the discount factor. 
    \item Let $R_k$ denote the reward at timestep $k$. 
    \item Let $c$ denote the prompt used to generate the video. 
    \item Let $x^{<t}$ denote video frames generated before timestep $t$. 
    \item Let $x^t$ denote the video frame generated at timestep $t$. 
    \item Let $s_t := [c, x^{<t}]$ denote the state at timestep $t$. 
    \item Let $a_t$ denote the action taken in timestep $t$. 
\end{itemize}

We define the three essential functions as follows:
\begin{itemize}
    \item {\bf State-action function.}
    \begin{align*}
        Q_\pi([c,x^{<t}],x^t) = \E_\pi[\sum_{k=0}^{\infty}\gamma^{k} R_{t+k}|s_{t}=[c ,x^{<t}], a_t = x^t],
    \end{align*}
    \item {\bf Value function.}
    \begin{align*}
        V_\pi([c, x^{<t}]) = \E_\pi[Q_\pi([c ,x^{<t}] ,x^t)|s_t=[c,x^{<t}]],
    \end{align*}
    \item {\bf Advantage function.}
    \begin{align*}
        A_\pi([c ,x^{<t}], x^t) = Q_\pi([c, x^{<t}], c^t)-V_\pi([c, x^{<t}]).
    \end{align*}
\end{itemize}

\end{definition}

Next, we demonstarte that the value function consistently reflects the relative performance of policies. Specifically, if one policy outperforms another, it will achieve a higher expected reward as measured by the value functions.
\begin{theorem} [Optimal policy guarantees, informal version of Theorem~\ref{thm:optimal_policy_guarantees}] \label{thm:optimal_policy_guarantees:informal}
If the following conditions hold:
\begin{itemize}
    \item Let $\pi$ and $\wt\pi$ denote two policies. 
    \item Let $c$ denote the prompt used to generate the video. 
    \item Let $x^W$ denote the human-preferred generated video, and $x^L$ denote not-preferred video. 
    \item Let $Q_\pi, V_\pi, A_\pi$ denote the state-action function, value function, and the advantage function respectively, as Defined in Definition~\ref{def:three_essential_function}.
    \item Let $x^{<t}$ denote video frames generated before timestep $t$. 
    \item Let $x^t$ denote the video frame generated at timestep $t$. 
    \item Let $s_t := [c, x^{<t}]$ denote the state at timestep $t$. 
    \item Let $a_t$ denote the action taken in timestep $t$. 
    \item Suppose the policy $\wt\pi$ is better than the policy $\pi$, which means $\E_{z \sim \wt\pi}[A_\pi([c, x^{<t}], z)]\ge 0$. 
\end{itemize}

Then, we can show that
\begin{align*}
    \E_{c \sim \mathcal{D}}[V_{\wt\pi}(c)] \ge \E_{c\sim \mathcal{D}}[V_\pi(c)]. 
\end{align*}
\end{theorem}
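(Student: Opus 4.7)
The plan is to invoke the Performance Difference Lemma (Kakade--Langford identity) from reinforcement learning. The core idea is to express the gap $V_{\wt\pi}(c) - V_\pi(c)$ as a discounted expectation of the advantage function $A_\pi$ under trajectories sampled from $\wt\pi$. Once this identity is in hand, the hypothesis $\E_{z \sim \wt\pi}[A_\pi([c, x^{<t}], z)] \ge 0$ at every reachable state forces each term in the expansion to be non-negative, and the desired inequality follows by taking expectation over $c \sim \mathcal{D}$.

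First, I would establish the algebraic identity $V_{\wt\pi}(s_0) - V_\pi(s_0) = \E_{\tau \sim \wt\pi}\bigl[\sum_{t=0}^{\infty} \gamma^{t} A_\pi(s_t, a_t) \mid s_0\bigr]$, where the trajectory $\tau = (s_0, a_0, s_1, a_1, \dots)$ is generated by rolling out $\wt\pi$ from the initial state $s_0 = c$. This is done by the standard telescoping trick: add and subtract $\gamma V_\pi(s_{t+1})$ inside the $\wt\pi$-trajectory sum of discounted rewards, and recognize the resulting one-step quantities as $R_t + \gamma V_\pi(s_{t+1}) - V_\pi(s_t)$, whose conditional expectation given $(s_t, a_t)$ equals $Q_\pi(s_t, a_t) - V_\pi(s_t) = A_\pi(s_t, a_t)$ by Definition~\ref{def:three_essential_function}.

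Next, I would apply the assumption $\E_{a_t \sim \wt\pi(\cdot \mid s_t)}[A_\pi(s_t, a_t)] \ge 0$ for every state $s_t = [c, x^{<t}]$ encountered along an $\wt\pi$-rollout. By the tower property, this implies each summand $\E_{\tau \sim \wt\pi}[\gamma^{t} A_\pi(s_t, a_t)] \ge 0$, so the entire discounted sum is non-negative. Hence $V_{\wt\pi}(c) \ge V_\pi(c)$ pointwise in $c$, and taking expectation over $c \sim \mathcal{D}$ yields the claim.

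The main obstacle will be making the ``reachability under $\wt\pi$'' step watertight: the hypothesis as stated only quantifies a non-negative advantage at states of the form $[c, x^{<t}]$, and I must argue that this covers \emph{every} partial trajectory $\wt\pi$ can generate from $c$, not merely the root context. If the statement is read as holding pointwise over all such reachable $[c, x^{<t}]$, the telescoping argument closes cleanly. A secondary subtlety is justifying the interchange of expectation and the infinite sum; bounded per-step rewards combined with the discount $\gamma \in (0,1)$ give a geometric majorant, so dominated convergence applies and should be invoked explicitly for rigor.
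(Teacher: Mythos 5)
Your proposal is correct and follows essentially the same route as the paper's proof: the paper also applies the performance difference (Kakade--Langford) telescoping identity, rewrites the discounted return under $\wt\pi$ as a sum of one-step advantages $R_t + \gamma V_\pi([c,x^{<t+1}]) - V_\pi([c,x^{<t}]) = A_\pi([c,x^{<t}],x^t)$, and concludes from the non-negativity assumption. Your added remarks on reachability of the states $[c,x^{<t}]$ and on dominated convergence for the infinite sum address gaps the paper leaves implicit, but the core argument is identical.
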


\subsection{Modeling Human Preference} \label{sec:theory:modeling_human_preference}

Another interesting finding is that our algorithm for video generation is equivalent to the Bradley-Terry model, indicating that our method can perfectly model human preferences for videos. 
We begin with introducing the Bradley-Terry model, which quantifies human preferences by comparing the relative rewards of two videos generated from the same prompt. 
This model provides a probabilistic framework for evaluating the likelihood that one video is preferred over another based on their cumulative discounted rewards. 
We restate its formal definition as follows:
\begin{definition} [Bradley-Terry model, \cite{BradleyTerry1952}] \label{def:bradley_terry_model}
If the following conditions hold:
\begin{itemize}
    \item Let $c$ denote the prompt used to generate the video. 
    \item Let $x_1, x_2$ denote two videos generated the same prompt $c$.
    \item Let $\gamma \in (0, 1)$ denote the discount factor. 
    \item Let $r(c, x) := \sum_{t=1}^T\gamma^{t-1}R([c,x^{<t}], x^t)$ denote the reward function. 
\end{itemize}

Then, we defined the Bradley-Terry model, which measures the human preference between two videos $(x_1, x_2)$ given the same prompt $c$, as follows:
\begin{align*}
    P_{\mathrm{BT}}(x_1 \succ x_2 |c) 
    = \frac{\exp(r(c, x_1))}{\exp(r(c, x_1))+\exp(r(c, x_2))}
\end{align*}
\end{definition}
Intuitively understanding, the Bradley-Terry model measures the relative preference between two videos $(x_1, x_2)$ by comparing their cumulative discounted advantages $A_\pi$ over time steps, normalized through the logistic sigmoid function $\sigma$. 
Then, we are ready to move to showing the equivalence between Bradley-Terry model and our algorithm. 
\begin{theorem} [Equivalence with Bradley-Terry model, Theorem~\ref{thm:equivalence_to_bt_model}] \label{thm:equivalence_to_bt_model:informal}
If the following conditions hold:
\begin{itemize}
    \item Let the Bradley-Terry model be defined as Definition~\ref{def:bradley_terry_model}. 
    \item Let $Q_\pi, V_\pi, A_\pi$ denote the state-action function, value function, and the advantage function respectively, as Defined in Definition~\ref{def:three_essential_function}.
    \item Let $\sigma(x)=1/(1+\exp(-x))$ denote the logistic sigmoid function.
\end{itemize}

Then, we can show the equivalence between the Bradley-Terry model and the regret preference model as follows:
\begin{align*}
    & ~ P_{\mathrm{BT}}(x_1 \succ x_2 |c) \\
    = & ~ \sigma(\sum_{t=1}^{T_1}\gamma^{t-1}A_\pi([c,x_1^{<t}], x_1^{t}) - \sum_{t=1}^{T_2}\gamma^{t-1}A_\pi([c,x_2^{<t}], x_2^{t})). 
\end{align*}
\end{theorem}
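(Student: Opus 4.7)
The plan is to reduce the equivalence to an identity involving only the difference $r(c,x_1)-r(c,x_2)$, using the elementary fact that $e^a/(e^a+e^b)=\sigma(a-b)$. So after applying this to the Bradley--Terry definition, it suffices to prove, for any single video $x$ of length $T$,
\begin{align*}
\sum_{t=1}^{T}\gamma^{t-1}R([c,x^{<t}],x^{t}) \;=\; \sum_{t=1}^{T}\gamma^{t-1}A_{\pi}([c,x^{<t}],x^{t}) + V_{\pi}(c)
\end{align*}
(up to a term that will cancel between $x_1$ and $x_2$). Then the shared $V_\pi(c)$ drops out in the difference, yielding precisely the right-hand side inside $\sigma(\cdot)$.

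To prove the per-video identity, I would first invoke the standard Bellman relation implied by Definition~\ref{def:three_essential_function}: $Q_\pi(s_t,a_t) = R(s_t,a_t) + \gamma V_\pi(s_{t+1})$, so that $R(s_t,a_t) = A_\pi(s_t,a_t) + V_\pi(s_t) - \gamma V_\pi(s_{t+1})$ with $s_t:=[c,x^{<t}]$ and $a_t:=x^t$. Substituting into the cumulative reward gives
\begin{align*}
r(c,x) = \sum_{t=1}^{T}\gamma^{t-1}A_\pi(s_t,a_t) + \sum_{t=1}^{T}\bigl(\gamma^{t-1}V_\pi(s_t) - \gamma^{t}V_\pi(s_{t+1})\bigr).
\end{align*}
The second sum telescopes to $V_\pi(s_1) - \gamma^{T}V_\pi(s_{T+1}) = V_\pi(c) - \gamma^{T}V_\pi([c,x^{\le T}])$, since $s_1=[c,x^{<1}]=c$.

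Applying this to both $x_1$ and $x_2$ and subtracting, the shared $V_\pi(c)$ cancels immediately. What remains on the boundary side is $\gamma^{T_2}V_\pi([c,x_2^{\le T_2}]) - \gamma^{T_1}V_\pi([c,x_1^{\le T_1}])$, which must vanish to recover the target identity. The main obstacle is therefore not algebraic manipulation but justifying this terminal term: I would adopt the standard episodic-MDP convention that the value at a terminal state (a completed video) is zero, since no further reward is accrued after the final frame; alternatively, one can absorb the terminal value into a redefinition of $R$ at the last step. Once that convention is fixed, the two displayed expressions coincide, and combining with the $\sigma(a-b)$ rewriting of Definition~\ref{def:bradley_terry_model} completes the proof. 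A minor bookkeeping point to be careful about is that the two videos may have different lengths $T_1 \ne T_2$, so the telescoping bounds must be tracked separately, but the cancellation of $V_\pi(c)$ does not depend on equal lengths.
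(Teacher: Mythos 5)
Your proposal is correct and follows essentially the same route as the paper's proof: both rewrite the Bradley--Terry ratio via $e^a/(e^a+e^b)=\sigma(a-b)$, decompose the cumulative reward through the Bellman relation $R(s_t,a_t)=A_\pi(s_t,a_t)+V_\pi(s_t)-\gamma V_\pi(s_{t+1})$ into a telescoping sum, and dispose of the boundary terms by taking the terminal value to be zero and noting that both videos share the same initial state $[c,[\ ]]$. The terminal-value issue you flag as the main obstacle is exactly the point the paper handles in its Eq.~\eqref{eq:last_value_function_is_zero}, using the same episodic convention you propose.
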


\subsection{Optimal Policy for Video-DPO Optimization} \label{sec:theory:optmal_policy_for_video_dpo}

After demonstrating the effectiveness of our algorithm by establishing its equivalence to the Bradley-Terry model and its capability to distinguish between advantageous and disadvantageous policies, we proceed to explore the relationship between the state-action function and the optimal policy. 
The preference optimization of the video generation can be formalized into a rigorous mathematical framework, we provide the formal definition as follows:
\begin{definition} [Video-frame-level direct preference optimization problem] \label{def:video_dpo_optimization_problem}
If the following conditions hold:
\begin{itemize}
    \item Let $c$ denote the prompt used to generate the video. 
    \item Let $x^{<t}$ denote video frames generated before timestep $t$. 
    \item Let $x^t$ denote the video frame generated at timestep $t$. 
    \item Let $Q_\pi, V_\pi, A_\pi$ denote the state-action function, value function, and the advantage function respectively, as Defined in Definition~\ref{def:three_essential_function}.
    \item Let $\pi_\theta$ denote the policy being optimized.
    \item Let $\pi_{\mathrm{ref}}$ denote the reference policy.
    \item Let $\beta \in \R$ denote the hyperparameter for controlling the weight of the KL-divergence. 
\end{itemize}

Then we define the objective of the video-frame-level direct preference optimization problem as follows:
\begin{align*}
    \max_{\pi_\theta} & ~ \E_{c, x^{<t}\sim\mathcal{D},z\sim \pi_\theta(\cdot|[c,x^{<t}])}[A_{\pi_{\mathrm{
ref}}}([c,x^{<t}], z)\\
    &~ - \beta D_{\mathrm{KL}}(\pi_\theta(\cdot|[c,x^{<t}])||\pi_{\mathrm{ref}}(\cdot|[c,x^{<t}]))].
\end{align*}

\end{definition}

Based on the formal definition of the optimization problem provided above, we present our findings regarding the relationship between the state-action function and the optimal policy for the problem defined in Definition~\ref{def:video_dpo_optimization_problem}.

\begin{theorem} [Optimal policy for video-DPO problem, informal version of Theorem~\ref{thm:optimal_policy_for_video_dpo}] \label{thm:optimal_policy_for_video_dpo:informal}
If the following conditions hold:
\begin{itemize}
    \item Let the video-DPO optimization problem be defined as Definition~\ref{def:video_dpo_optimization_problem}. 
    \item Let $c$ denote the prompt used to generate the video. 
    \item Let $x^{<t}$ denote video frames generated before timestep $t$. 
    \item Let $x^t$ denote the video frame generated at timestep $t$. 
    \item Let $Q_\pi, V_\pi, A_\pi$ denote the state-action function, value function, and the advantage function respectively, as defined in Definition~\ref{def:three_essential_function}.
    \item Let $\pi_\theta$ denote the policy being optimized.
    \item Let $\pi_{\mathrm{ref}}$ denote the reference policy.
    \item Let $\beta \in \R$ denote the hyperparameter for controlling the weight of the KL-divergence. 
    \item For simplicity, let $s_t := [c, x^{<t}]$ to represent the state. 
    \item Let $Z([c,x^{<t}];\beta)$ denote the partition function, which is defined by
    \begin{align*}
        Z(s_t;\beta) := \E_{z\sim \pi_{\mathrm{ref}}(\cdot|s_t)}\exp(\beta^{-1}Q_{\pi_{\mathrm{ref}}}(s_t,z))
    \end{align*}
\end{itemize}

Then, we can show that the optimal policy satisfies the following equation:
\begin{align*}
    \pi_\theta^*(z|[c,x^{<t}])=
     \frac{\pi_{\mathrm{ref}}(z|[c,x^{<t}])\exp(\beta^{-1} Q_{\pi_{\mathrm{ref}}}([c,x^{<t}],z))}{Z([c,x^{<t}];\beta)}. 
\end{align*}
\end{theorem}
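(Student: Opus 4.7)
The plan is to reduce the statement to the standard closed-form derivation used for reward-regularized policies in DPO, with one preliminary simplification that lets us replace the advantage function by the state-action function. First I would note that $A_{\pi_{\mathrm{ref}}}([c,x^{<t}],z) = Q_{\pi_{\mathrm{ref}}}([c,x^{<t}],z) - V_{\pi_{\mathrm{ref}}}([c,x^{<t}])$, and the value term depends only on the state $s_t=[c,x^{<t}]$, not on the action $z$. Hence it factors out of the inner expectation over $z \sim \pi_\theta(\cdot|s_t)$ as an additive constant with respect to the choice of $\pi_\theta$, so maximizing the objective in Definition~\ref{def:video_dpo_optimization_problem} is equivalent to maximizing
\begin{align*}
\E_{c,x^{<t}\sim\mathcal{D}}\E_{z\sim\pi_\theta(\cdot|s_t)}\bigl[Q_{\pi_{\mathrm{ref}}}(s_t,z)\bigr] - \beta\, \E_{c,x^{<t}\sim\mathcal{D}}\bigl[D_{\mathrm{KL}}(\pi_\theta(\cdot|s_t)\,\|\,\pi_{\mathrm{ref}}(\cdot|s_t))\bigr].
\end{align*}

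Second, since the outer expectation over $(c,x^{<t})$ is a nonnegative weighted sum, it suffices to find the maximizer pointwise in $s_t$. Fix $s_t$ and consider
\begin{align*}
\max_{\pi_\theta(\cdot|s_t)}\ \E_{z\sim\pi_\theta(\cdot|s_t)}\bigl[Q_{\pi_{\mathrm{ref}}}(s_t,z)\bigr] - \beta\, D_{\mathrm{KL}}\bigl(\pi_\theta(\cdot|s_t)\,\|\,\pi_{\mathrm{ref}}(\cdot|s_t)\bigr).
\end{align*}
I would then divide through by $-\beta$ (turning the problem into a minimization) and complete the logarithm by inserting the partition function $Z(s_t;\beta) = \E_{z\sim\pi_{\mathrm{ref}}}\exp(\beta^{-1}Q_{\pi_{\mathrm{ref}}}(s_t,z))$. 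Writing $Q_{\pi_{\mathrm{ref}}}(s_t,z) = \beta\log\exp(\beta^{-1}Q_{\pi_{\mathrm{ref}}}(s_t,z))$ and pulling $\log Z(s_t;\beta)$ inside as a $z$-independent additive constant, the objective rearranges to
\begin{align*}
\beta\,\E_{z\sim\pi_\theta(\cdot|s_t)}\!\left[\log\frac{\pi_\theta(z|s_t)}{\pi_{\mathrm{ref}}(z|s_t)\exp(\beta^{-1}Q_{\pi_{\mathrm{ref}}}(s_t,z))/Z(s_t;\beta)}\right] - \beta\log Z(s_t;\beta),
\end{align*}
so the minimization is exactly a KL-divergence from $\pi_\theta(\cdot|s_t)$ to the candidate distribution $\pi^*_\theta(z|s_t) := \pi_{\mathrm{ref}}(z|s_t)\exp(\beta^{-1}Q_{\pi_{\mathrm{ref}}}(s_t,z))/Z(s_t;\beta)$, which is a valid probability density by construction of $Z(s_t;\beta)$.

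Third, I would invoke Gibbs' inequality: the KL-divergence is nonnegative and vanishes if and only if the two distributions agree almost everywhere. Therefore the pointwise minimizer, and hence the global maximizer of the original objective, is precisely $\pi_\theta^*(z|s_t)$, yielding the claimed closed form. The main obstacle I anticipate is not any single calculation but rather the bookkeeping around the advantage-to-$Q$ reduction: one must justify carefully that $V_{\pi_{\mathrm{ref}}}(s_t)$ truly contributes only a state-dependent constant to the inner maximization (which follows from its definition in Definition~\ref{def:three_essential_function} as an expectation not involving the action $z$), and that the pointwise argument is valid because both the reward term and the KL term decompose as expectations under $\mathcal{D}$ over $s_t$. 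Once that is in place, the remainder is the standard log-partition manipulation.
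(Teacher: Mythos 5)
Your proposal is correct and follows essentially the same route as the paper's proof: both drop the state-dependent value term $V_{\pi_{\mathrm{ref}}}(s_t)$ from the advantage, insert the partition function $Z(s_t;\beta)$ to normalize $\pi_{\mathrm{ref}}(z|s_t)\exp(\beta^{-1}Q_{\pi_{\mathrm{ref}}}(s_t,z))$ into a valid distribution, rewrite the objective as a negative KL-divergence plus $\pi_\theta$-independent constants, and conclude via nonnegativity of the KL. Your write-up is if anything slightly more careful than the paper's (you make the pointwise-in-$s_t$ reduction and the appeal to Gibbs' inequality explicit, where the paper simply observes that only the KL term depends on $\pi_\theta$), but there is no substantive difference in approach.
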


\subsection{Offsetting the Partition Function} \label{sec:theory:offsset_partition}

One key challenge with the optimal policy described above is its dependence on the partition function $ Z(s_t; \beta) $, which itself relies on the reference policy $ \pi_{\mathrm{ref}} $. This dependency prevents us from directly applying the cancellation trick used in the original DPO algorithm. However, by carefully analyzing the advantage function $ A $ and leveraging the value function $ V $, we can circumvent the limitations imposed by the partition function $ Z(s_t; \beta) $. We formalize this solution in the following theorem.
\begin{theorem} [Offset partition function $Z(s_t, \beta)$, informal version of Theorem~\ref{thm:offsetting_partition}] \label{thm:offsetting_partition:informal}
If the following conditions hold:
\begin{itemize}
    \item Let $c$ denote the prompt used to generate the video. 
    \item Let $x^{<t}$ denote video frames generated before timestep $t$. 
    \item Let $x^t$ denote the video frame generated at timestep $t$. 
    \item Let $Q_\pi, V_\pi, A_\pi$ denote the state-action function, value function, and the advantage function respectively, as defined in Definition~\ref{def:three_essential_function}.
    \item Let $\pi_\theta$ denote the policy being optimized.
    \item Let $\pi_{\mathrm{ref}}$ denote the reference policy.
    \item Let $\beta \in \R$ denote the hyperparameter for controlling the weight of the KL-divergence. 
    \item For simplicity, let $s_t := [c, x^{<t}]$ to represent the state. 
    \item Let $Z([c,x^{<t}];\beta)$ denote the partition function, which is defined by
    \begin{align*}
        Z(s_t;\beta) := \E_{z\sim \pi_{\mathrm{ref}}(\cdot|s_t)}\exp(\beta^{-1}Q_{\pi_{\mathrm{ref}}}(s_t,z))
    \end{align*}
    \item Let $u(c, x_1, x_2)$ denote the difference in rewards of two generated videos $x_1$ and $x_2$, which is defined by
    \begin{align*}
        u(c, x_1, x_2):=\beta\log\frac{\pi_{\theta}(x_1| c)}{\pi_{\mathrm{ref}}(x_1| c)}-\beta\log\frac{\pi_{\theta}(x_2| c)}{\pi_{\mathrm{ref}}(x_2| c)}
    \end{align*}
    \item Let $\delta(c, x_1, x_2)$ denote the difference in sequential forward KL divergence, which is defined by
    \begin{align*}
    \delta(c, x_1, x_2) = & ~~\beta D_{\mathrm{SeqKL}}(c,x_2;\pi_{\mathrm{ref}}\| \pi_{\theta})\\
    &~ -\beta D_{\mathrm{SeqKL}}(c,x_1;\pi_{\mathrm{ref}}\| \pi_{\theta})
    \end{align*}
\end{itemize}

Then, we can show that
\begin{align*}
    P_{\mathrm{BT}}^*(x_1 \succ x_2 |c)=\sigma(u^*(c, x_1, x_2) - \delta^*(c, x_1, x_2))
\end{align*}
\end{theorem}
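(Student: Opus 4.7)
The plan is to combine Theorem~\ref{thm:equivalence_to_bt_model:informal} (which writes $P_{\mathrm{BT}}^*$ as a sigmoid of a difference of discounted advantage sums) with the closed form for the optimal policy from Theorem~\ref{thm:optimal_policy_for_video_dpo:informal}, and then repackage the leftover $V_{\pi_{\mathrm{ref}}}$ and $\log Z$ contributions as sequential-KL terms so that they coincide with $\delta^*$. In other words, I would decompose the sigmoid argument into a ``policy-log-ratio'' piece, which should become $u^*$, and a ``partition/value-function'' piece, which should become $-\delta^*$.

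First, inverting the optimal-policy identity of Theorem~\ref{thm:optimal_policy_for_video_dpo:informal} gives
\begin{align*}
Q_{\pi_{\mathrm{ref}}}(s_t, z) = \beta\log\frac{\pi_\theta^*(z | s_t)}{\pi_{\mathrm{ref}}(z | s_t)} + \beta\log Z(s_t;\beta).
\end{align*}
Substituting this into $A_{\pi_{\mathrm{ref}}}(s_t, z)=Q_{\pi_{\mathrm{ref}}}(s_t, z)-V_{\pi_{\mathrm{ref}}}(s_t)$ and plugging the result into the argument of the sigmoid in Theorem~\ref{thm:equivalence_to_bt_model:informal}, the discounted sum splits into two blocks: a sum of log-ratios $\beta\log(\pi_\theta^*/\pi_{\mathrm{ref}})$, and a sum of leftover terms of the form $\beta\log Z(s_t;\beta)-V_{\pi_{\mathrm{ref}}}(s_t)$. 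Using the autoregressive factorization $\pi(x | c)=\prod_t\pi(x^t | [c,x^{<t}])$, the first block telescopes trajectory-wise into $\beta\log(\pi_\theta^*(x | c)/\pi_{\mathrm{ref}}(x | c))$, and subtracting the $x_2$ block from the $x_1$ block yields exactly $u^*(c,x_1,x_2)$.

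Second, I would identify the leftover terms with a per-step KL divergence. Taking the $\pi_{\mathrm{ref}}$-expectation of the same inverted optimal-policy identity and using $V_{\pi_{\mathrm{ref}}}(s_t)=\E_{z\sim\pi_{\mathrm{ref}}}[Q_{\pi_{\mathrm{ref}}}(s_t,z)]$ gives
\begin{align*}
V_{\pi_{\mathrm{ref}}}(s_t) = -\beta D_{\mathrm{KL}}(\pi_{\mathrm{ref}}(\cdot | s_t)\,\|\,\pi_\theta^*(\cdot | s_t)) + \beta\log Z(s_t;\beta),
\end{align*}
so the leftover simplifies to $\beta\log Z(s_t;\beta)-V_{\pi_{\mathrm{ref}}}(s_t)=\beta D_{\mathrm{KL}}(\pi_{\mathrm{ref}}(\cdot | s_t)\,\|\,\pi_\theta^*(\cdot | s_t))$. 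Summing over $t$ with the discount $\gamma^{t-1}$ along a trajectory recovers $\beta D_{\mathrm{SeqKL}}(c,x;\pi_{\mathrm{ref}}\|\pi_\theta^*)$, and the $x_1{-}x_2$ difference of these per-trajectory sums is $\beta D_{\mathrm{SeqKL}}(c,x_1;\pi_{\mathrm{ref}}\|\pi_\theta^*)-\beta D_{\mathrm{SeqKL}}(c,x_2;\pi_{\mathrm{ref}}\|\pi_\theta^*)=-\delta^*(c,x_1,x_2)$. Adding the $u^*$ contribution from the first step produces the sigmoid argument $u^*-\delta^*$, which is precisely what the theorem claims.

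The main obstacle is the bookkeeping in the first step: I need to reconcile the discount $\gamma^{t-1}$ threaded through the advantage sums with the \emph{undiscounted} trajectory log-ratios that appear in the definition of $u$. When $\gamma=1$ the factorization-based telescoping is immediate; when $\gamma<1$ one must either redefine $u$ and $\delta$ with compatible per-step weights or invoke an auxiliary weighted-trajectory identity, and I would resolve this by aligning the conventions with the formal statement in Appendix~\ref{sec:app:proof_details}. A secondary but routine care point is the direction of KL in $D_{\mathrm{SeqKL}}$: the expectation identity naturally produces $D_{\mathrm{KL}}(\pi_{\mathrm{ref}}\|\pi_\theta^*)$, so the theorem's sign convention must place $\delta$'s ``$x_2$ minus $x_1$'' in the right order for the overall $-\delta^*$ to appear with the correct sign after subtracting $x_2$ from $x_1$.
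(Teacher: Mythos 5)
Your proposal is correct and follows essentially the same route as the paper's own proof: invert the optimal-policy identity of Theorem~\ref{thm:optimal_policy_for_video_dpo} to express $Q_{\pi_{\mathrm{ref}}}$ via the policy log-ratio plus $\beta\log Z$, substitute into the advantage sum from Theorem~\ref{thm:equivalence_to_bt_model}, cancel $\log Z$ against the value-function expectation to leave a per-step $D_{\mathrm{KL}}(\pi_{\mathrm{ref}}\|\pi_\theta^*)$, and set $\gamma=1$ so the log-ratios telescope into $u^*$ and the KL terms assemble into $-\delta^*$. The discount-factor caveat and the KL sign convention you flag are handled in the paper exactly as you propose (it explicitly chooses $\gamma=1$ in the corresponding step), so there is no gap.
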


Finally, we have established the rigorous theoretical framework for our algorithm. By elucidating the connections between key functions—such as the state-action function, value function, and advantage function—we demonstrate the robustness and effectiveness of our approach. This framework not only enables the algorithm to accurately model human preferences but also provides a principled method for optimizing video generation policies.



\section{Methodology}

We propose a preference discriminator-free DPO framework that replaces computationally expensive generated video pairs with real/edited video pairs. Specifically, edited videos (e.g., reversed playback, frame-shuffled, or noise-corrupted real videos) serve as lose cases, while original real videos act as win cases (detailed in Section~\ref{chapter:method1}). These win/lose pairs are then integrated into the DPO optimization process (detailed in Section~\ref{chapter:method2}).

\subsection{Discriminator-Free Data Generation}
\label{chapter:method1}

We construct real/edited video pairs through artificial distortion operations on raw videos, eliminating the need for trained discriminators. 
Let $\mathbf{V}^w = \{f_t\}_{t=1}^T$ denote the original video (win case) where $f_t \in \mathbb{R}^{H\times W\times 3}$ represents the $t$-th RGB frame, 
we generate corrupted counterparts $\mathbf{V}^l$ through three distortion categories specifically designed to simulate prevalent artifacts in video generation:

\begin{itemize}
\item \textbf{Temporal Distortion} ($\mathbf{V}^l_{\text{temp}}$):

\begin{equation}
\begin{aligned}
    \mathbf{V}^l_{\text{temp}} = \{f_{\phi(t)}\}_{t=1}^T, \\
    \phi(t) = 
        \begin{cases} 
        T+1-t & \textit{(global reversal)} \\
        \mathcal{P}(t) & \textit{(partial shuffle)}
        \end{cases}
\end{aligned}
\end{equation}

where $\mathcal{P}(t)$ denotes a random permutation operator. Specifically: {Global reversal} explicitly reverses frame order with mapping $f_t \rightarrow f_{T+1-t}$, simulating illogical motions (e.g. backward human walking). {Partial shuffle} randomly permutes frame blocks $[f_{k:m}]$ where $m-k \leq 0.2T$, creating incoherent dynamics.

 \item \textbf{Spatial Distortion} ($\mathbf{V}^l_{\text{spat}}$):
 \begin{equation}
    \mathbf{V}^l_{\text{spat}} = \{\mathcal{G}(v_t) + \epsilon_t\}_{t=1}^T,\quad \epsilon_t \sim \mathcal{N}(0,\sigma^2\mathbf{I})
 \end{equation}
 where $\mathcal{G}(\cdot)$ denotes spatial degradation operators including Gaussian blur and color shift, while $\epsilon$ adds pixel-level noise. This design follows the spatial artifact simulation principle in video codecs, where such perturbations can approximate operations like color bleeding and blocking effects. Similar strategies have proven effective in several works~\cite{simclr, zhang2017mixup, yun2019cutmixregularizationstrategytrain}.

 \item \textbf{Hybrid Distortion} ($\mathbf{V}^l_{\text{hybrid}}$):
 \begin{equation}
    \mathbf{V}^l_{\text{hybrid}} = \{\mathcal{G}(v_{\phi(t)}) + \epsilon_t\}_{t=1}^T
 \end{equation}
 This composite perturbation simultaneously injects temporal disorder through $\phi(t)$ and spatial degradation through $\mathcal{G}(\cdot)$, creating videos with coupled artifacts that mimic real-world failure modes in video generation. 
 For instance, reversing frames while applying color shifts (\textit{temporal-spatial entanglement}) forces the model to jointly address motion coherence and visual fidelity—two critical axes of video quality assessment. 
\end{itemize}

By explicitly generating these negative samples, we enforce the model to learn invariant features that resist similar artifacts.

\subsection{DPO Optimization}
\label{chapter:method2}

Our training objective combines direct preference optimization with supervised fine-tuning to leverage the complementary strengths of both paradigms:
\begin{equation}
    \mathcal{L}_{\text{total}} = \mathcal{L}_{\text{DPO}} + \lambda\mathcal{L}_{\text{SFT}}
\end{equation}

where $\lambda$ controls the balance between preference alignment and generation capability preservation. 
For video pairs $(\mathbf{V}^w, \mathbf{V}^l)$, the DPO loss amplifies the relative likelihood of the win case:
\begin{equation}
    L_{\text{DPO}}(V^W, V^L, c) = L(V^W, p) - L(V^L, p),
\end{equation}

The SFT loss can be defined as Eq.~\ref{eq:sftloss}, which anchors the model to the original data distribution, preventing over-optimization on edited artifacts.

Our framework leverages real/edited video pairs to guide preference alignment without the need for computationally expensive generated pairs.
The complete training procedure is formalized in Algorithm~\ref{alg:dfvpo}.

\begin{algorithm}[H]
\caption{Discriminator-Free Video Preference Optimization (DF-VPO)}
\label{alg:dfvpo}
\textbf{Input:} Video Set $\mathcal{V} = \{V^w_1, V^w_2, \dots, V^w_N\}$, Distortion Operators $\mathcal{D}(\cdot)$, Supervised Fine-Tuning Loss Weight $\lambda$ \\
\textbf{Output:} Preference-Aligned Video Diffusion Model $G^*(\cdot)$
\begin{algorithmic}[1]
\State Initialize video diffusion model $G(\cdot)$ with pre-trained weights
\State $step \gets 0$
\For{$V^w_i \in \mathcal{V}$}
    \State \textbf{// Edited Video Generation}
    \State $V^l_i \gets \mathcal{D}(V^w_i)$ \Comment{Generate edited video (lose case) using distortion operators}
    \State \textbf{// DPO Loss Computation}
    \State $\mathcal{L}_{\text{DPO}} \gets L(V^W_i, p) - L(V^L_i, p)$
    \State \textbf{// Supervised Fine-Tuning Loss Computation}
    \State $\mathcal{L}_{\text{SFT}} \gets \text{SupervisedLoss}(V^w_i, G)$
    \State \textbf{// Total Loss Update}
    \State $\mathcal{L}_{\text{total}} \gets \mathcal{L}_{\text{DPO}} + \lambda \mathcal{L}_{\text{SFT}}$
    \State \textbf{// Update Model Parameters}
    \State $G \gets G - \eta \nabla \mathcal{L}_{\text{total}}$ \Comment{$\eta$ is the learning rate}
    \State $step \gets step + 1$
    \If{$step \mod K == 0$}
        \State \textbf{// Curriculum Distortion Update}
        \State $\mathcal{D}(\cdot) \gets \text{UpdateDistortionOperators}(\mathcal{D}(\cdot))$ \Comment{Update distortion operators based on curriculum}
    \EndIf
\EndFor
\State \textbf{Return} $G^*(\cdot) \gets G(\cdot)$
\end{algorithmic}
\end{algorithm}


\begin{figure*}[t]
  \centering
   \includegraphics[width=1\linewidth]{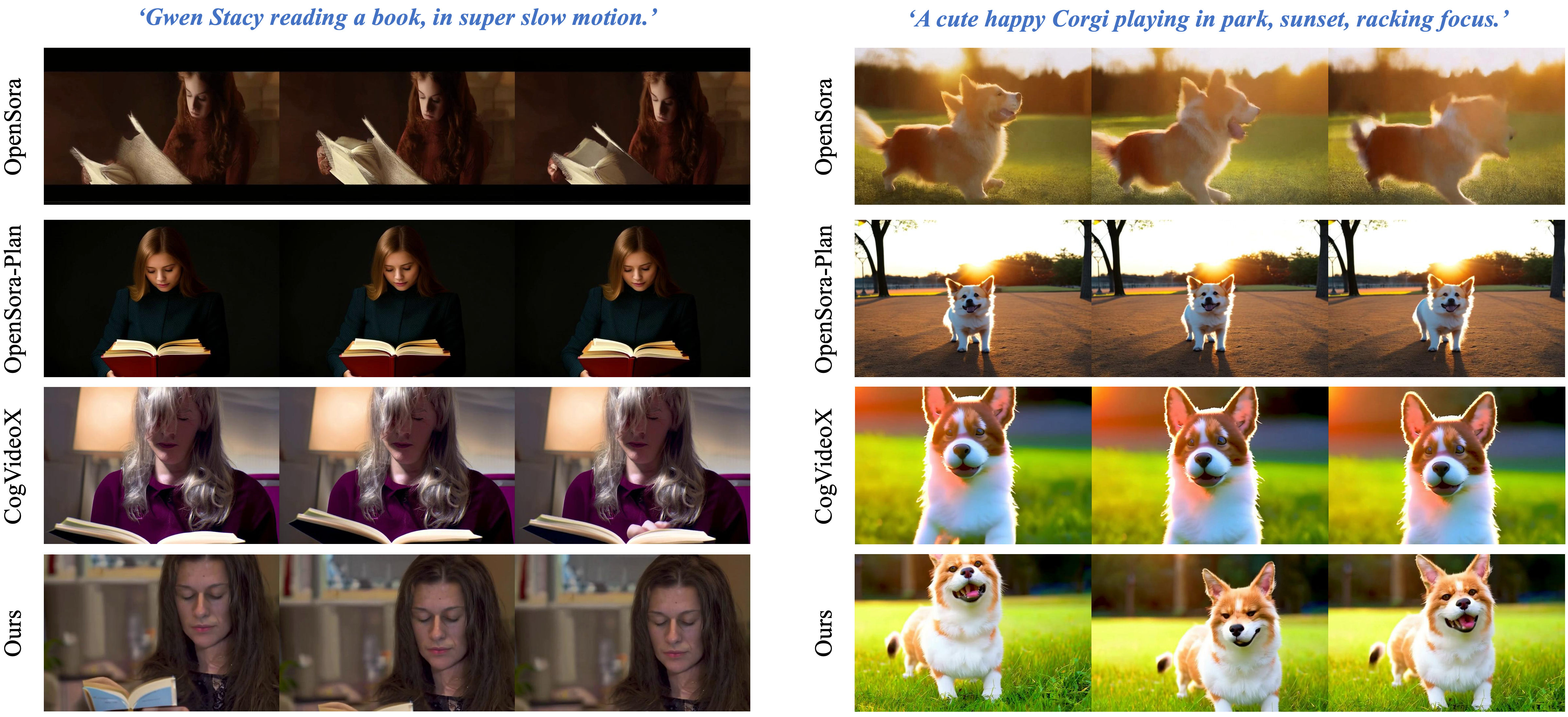}

   \caption{
    \textbf{Qualitative comparison with state-of-the-art models.} 
      Compared to OpenSora~\cite{opensora}, OpenSoraPlan~\cite{opensoraplan} and CogVideoX~\cite{yang2024cogvideox}.
      The OpenSora cases in the figure exhibit certain visual distortion, 
      while OpenSora-Plan and CogVideo cases tend to remain static. 
      In comparison, our method demonstrates good performance in both image quality and dynamic motion quality.
   }
   \label{fig:comp}
\end{figure*}


\section{Experiments}

    In this section, we perform evaluations to validate the proposed method. 
    We first describe the implementation details and dataset (Sec.~\ref{exp_implement}), 
    then compare the performance with existing methods (Sec.~\ref{exp_sota} and Sec.~\ref{exp_dpo}),
    and finally provide an analysis of the method design (Sec.~\ref{exp_abu}).

\subsection{Experiment Setup}

\paragraph{Implementation details.} 
\label{exp_implement}
Our framework is built upon CogVideoX~\cite{yang2024cogvideox} v1.0-2B model,
fine-tuned with a batch size of 1 and gradient accumulation steps of 16, trained on 8 NVIDIA H100 GPUs. 
Our reference model is the original CogVideoX model.
Due to the memory requirements of DPO training, 
which necessitates loading both the reference and training models simultaneously, 
we limit our experiments to smaller parameter models.
We use the AdamW optimizer~\cite{adamw} with a learning rate of 1e-8 and $\beta=5000$, following the DiffusionDPO~\cite{DiffusionDPO} setting. 
During inference, we generate 480P videos with 49 frames. 
We compare against two state-of-the-art video preference learning methods: Open-sora~\cite{opensora} and OpenSoraPlan~\cite{opensoraplan}.

\paragraph{Datasets.}
We use 
a publicly available open-source video-text dataset with 5 million videos and precise descriptions. It leverages a multi-scale captioning approach to ensure rich video-text alignment, supporting applications like zero-shot recognition and text-to-video generation.


\subsection{Compared with State-of-the-art Methods}
\label{exp_sota}

\paragraph{Qualitative Comparison.}
    We present qualitative comparisons of our method against SOTA baselines OpenSora~\cite{opensora} v1.3-1.1B, OpenSoraPlan~\cite{opensoraplan} v1.3.0 and CogVideoX~\cite{yang2024cogvideox} v1.0-2B.
    Results are shown in Fig. ~\ref{fig:comp}. 
    All the results are generated by the officially released models.
    In the image, we can observe the following:
    (1) OpenSora: The OpenSora cases exhibit structural distortions across critical regions. For instance, in the left frame (girl reading), facial features, hand-held books, and the right frame (Corgi), fur textures display unnatural deformations.
    (2) OpenSora-Plan and CogVideo: Both OpenSora-Plan and CogVideoX outputs show limited motion dynamics. Notably, CogVideoX introduces additional artifacts—the woman’s hair in the left case suffers from partial structural inconsistencies despite its static appearance.
    (3) Ours (DF-DPO): In contrast, our approach demonstrates superior performance in both visual fidelity and naturalistic motion dynamics. 

\begin{figure*}[t]
  \centering
   \includegraphics[width=1\linewidth]{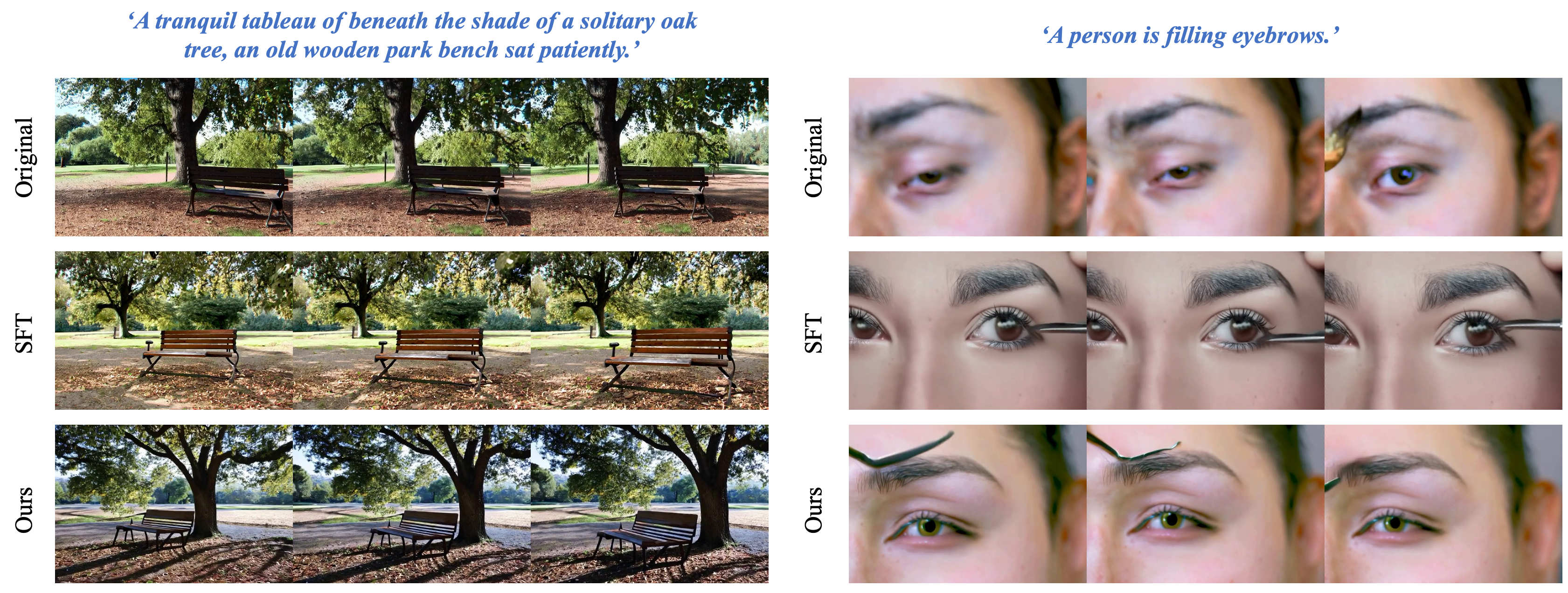}
   \caption{
       \textbf{ Comparison with SFT methods.}
       For the original model, the seat in the left case shows noticeable distortion, 
       while the right case exhibits some blurring. 
       The SFT results alleviate image quality issues but display limited motion range. 
       In contrast, our method maintains high image and motion quality while preserving a reasonable motion amplitude.
    }
   \label{fig:comp}
\end{figure*}

\begin{table*}[h]
  \caption{
    \textbf{User study results} of different models: 
    Visual Quality, Motion Quality, and Video-text Alignment ratings are on a scale from 1 to 5, with higher scores indicating better performance. 
    Our method achieves the highest scores across all evaluation criteria.
  }
  \label{userstudy}
  \centering
  \begin{tabular}{c|ccc|c}
    \toprule
    Model &
    \makecell{\textbf{Visual} \\ \textbf{Quality}} & \makecell{\textbf{Motion} \\ \textbf{Quality}} & \makecell{\textbf{Video-Text} \\ \textbf{Alignment}} &
    \textbf{Averange} 
    \\
    \midrule
    Baseline & 3.12 & 2.32 & 3.92 & 3.12 \\
    Baseline+SFT & 2.98 & 2.92 & 3.97 & 3.27 \\
    Baseline+Ours & \textbf{3.51} & \textbf{3.93} & \textbf{4.02} & \textbf{3.82} \\
    \bottomrule
  \end{tabular}
\end{table*}

\subsection{Compared with SFT Method}
\label{exp_dpo}

    \paragraph{Qualitative Comparison.}
    We present qualitative comparisons of our method against the original baseline CogVideoX~\cite{yang2024cogvideox}, SFT fine-tuned baseline.
    Results are shown in Fig. ~\ref{fig:comp}. 
    All the results are generated by the officially released models.
    In the image, we can observe the following:
    (1) Baseline: The original model exhibits visual artifacts in critical scenarios. For instance, in the park bench case, severe structural distortion occurs in the bench, while the eyebrow makeup sequence suffers from motion blurring.
    (2) SFT: SFT results alleviate image quality issues but display limited motion range. Both test cases tend toward static frames, which indicates that SFT leverages higher-quality but motion-constrained training data, likely due to the inherent motion characteristics of its training dataset.
    (3) Ours(DF-DPO): By explicitly incorporating temporal-negative samples (targeting motion artifacts) and spatial-negative samples (addressing visual quality), our approach achieves dual optimization. 
    While SFT teaches the model "what constitutes high-quality frames," the negative samples guide it to "avoid specific failure modes." 
    This mechanism enables DF-DPO trained model to generate videos with superior visual fidelity and natural motion amplitudes, striking an optimal balance between stability and dynamism.


    \paragraph{User Study.}
    For human evaluation, we conduct a user study with 30 participants to assess three key aspects of generated samples, guided by the following questions:
    (1) Visual Quality: \textit{How realistic is each static frame in the video?}
    (2) Motion Quality: \textit{Is the video almost static? Are the dynamics consistent with common human understanding? Is the motion continuous and smooth?}
    (3) Video-text Alignment: \textit{Does the video accurately reflect the target text? }
    Each question is rated on a scale from 1 to 5, with higher scores indicating better performance.  
    As shown in Tab. ~\ref{userstudy}, our method achieves the best human preferences on all evaluation parts.

\subsection{Ablation Study}
\label{exp_abu}

\begin{figure*}[t]
  \centering
   \includegraphics[width=1\linewidth]{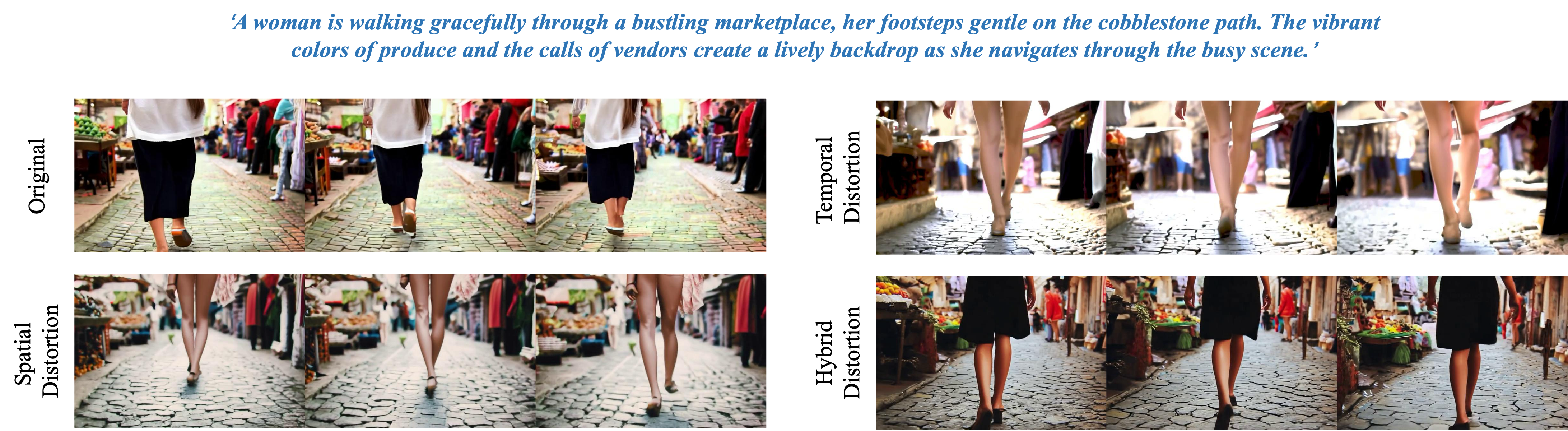}
   \caption{
        \textbf{ Comparison with different edit methods.}
        Original outputs exhibit foot distortion and motion discontinuity. Spatial Distortion improves clarity but introduces leg anomalies (frames 2-3), while Temporal Distortion enhances motion smoothness at the cost of blurring. Hybrid implementation resolves these trade-offs, achieving optimal visual-motion quality.
    }
   \label{fig:comp}
\end{figure*}

\paragraph{Comparison with other different edit methods.} 

    We perform several ablation experiments on different models of the proposed pipeline.
    The generated results are presented in Fig. ~\ref{fig:comp}.
    In the image, we can observe the following:
    (1) The original model's outputs exhibit noticeable distortion, 
    with deformations in both the feet and clothing of the character, accompanied by jerky walking motions.
    (2) Adding Spatial Distortion makes the image quality much better overall. However, the leg movements demonstrate physically implausible motion patterns—as seen in the second and third frames, where the character's leg articulation shows clear errors.
    (3) When using Temporal Distortion alone, the character's movements become smoother but the image gets blurry.
    (4) Hybrid Distortion (simultaneous integration of spatial-temporal components) delivers substantial improvements in both visual and motion quality.
\section{Conclusion}

We propose a novel discriminator-free DPO framework that eliminates the need for generated video pairs by leveraging real/edited video pairs, achieving efficient preference alignment while avoiding computational constraints. Our method demonstrates superiority over supervised fine-tuning baselines on CogVideoX (Algorithm~\ref{alg:dfvpo}), with theoretical guarantees for cross-distribution training.

\paragraph{Limitations and Future Work:}
(1) Baseline evaluations are constrained by the memory-intensive nature of CogVideoX; we will validate our framework on more efficient architectures;
(2) Current video distortions imperfectly mimic generative artifacts—future work will explore adversarial editing or learned distortion operators to better approximate real-world failure modes.
\clearpage
{
    \small
    \bibliographystyle{ieeenat_fullname}
    \bibliography{main}
}

\clearpage
\appendix 
\setcounter{figure}{0}
\setcounter{page}{1}

\section{Proof Details}
\label{sec:app:proof_details}

In this section, we present the theoretical proof details of our algorithm. Our analysis reveals that the proposed approach offers several key advantages: it leverages the sequential nature of video generation to define a structured optimization framework, aligns with human preferences through a theoretically grounded connection to the Bradley-Terry model, and ensures stable policy optimization by incorporating a principled balance between reward maximization and policy regularization. These properties collectively enhance the robustness and effectiveness of the algorithm in practical video generation tasks.

We first demonstrate that the value function consistently reflects the relative performance of policies. Specifically, if one policy outperforms another, it will achieve a higher expected reward as measured by the value functions.
\begin{theorem} [Optimal policy guarantees, formal version of Theorem~\ref{thm:optimal_policy_guarantees:informal}] \label{thm:optimal_policy_guarantees}
If the following conditions hold:
\begin{itemize}
    \item Let $\pi$ and $\wt\pi$ denote two policies. 
    \item Let $c$ denote the prompt used to generate the video. 
    \item Let $x^W$ denote the human-preferred generated video, and $x^L$ denote not-preferred video. 
    \item Let $Q_\pi, V_\pi, A_\pi$ denote the state-action function, value function, and the advantage function respectively, as Defined in Definition~\ref{def:three_essential_function}.
    \item Let $x^{<t}$ denote video frames generated before timestep $t$. 
    \item Let $x^t$ denote the video frame generated at timestep $t$. 
    \item Let $s_t := [c, x^{<t}]$ denote the state at timestep $t$. 
    \item Let $a_t$ denote the action taken in timestep $t$. 
    \item Suppose the policy $\wt\pi$ is better than the policy $\pi$, which means $\E_{z \sim \wt\pi}[A_\pi([c, x^{<t}], z)]\ge 0$. 
\end{itemize}

Then, we can show that
\begin{align*}
    \E_{c \sim \mathcal{D}}[V_{\wt\pi}(c)] \ge \E_{c\sim \mathcal{D}}[V_\pi(c)]. 
\end{align*}
\end{theorem}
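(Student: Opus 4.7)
The plan is to prove this by invoking the classical Performance Difference Lemma of Kakade and Langford, specialized to the sequential video generation MDP set up in Definition~\ref{def:three_essential_function}. The guiding intuition is that a policy is globally better than another exactly when it has non-negative advantage over it in expectation at every state reachable along its trajectories.

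First, I would fix a prompt $c$ (so $s_0 = c$) and expand $V_{\widetilde\pi}(s_0)-V_\pi(s_0)$ by inserting a telescoping sum of $V_\pi$ values evaluated along trajectories drawn from $\widetilde\pi$. Writing $\tau = (s_0, a_0, s_1, a_1, \dots)$ with $a_t \sim \widetilde\pi(\cdot\,|\,s_t)$, the key algebraic identity is
\begin{align*}
V_{\widetilde\pi}(s_0) - V_\pi(s_0) = \E_{\tau\sim\widetilde\pi}\Bigl[\sum_{t=0}^{\infty}\gamma^{t}\bigl(R_t + \gamma V_\pi(s_{t+1}) - V_\pi(s_t)\bigr)\Bigr],
\end{align*}
which follows by adding and subtracting the telescoping terms $\gamma^t V_\pi(s_t)$ and using the definition of $V_{\widetilde\pi}$ as the expected discounted return under $\widetilde\pi$.

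Next I would apply the Bellman consistency for $\pi$, namely $\E[R_t + \gamma V_\pi(s_{t+1})\,|\, s_t, a_t] = Q_\pi(s_t,a_t)$, together with $A_\pi(s_t,a_t) = Q_\pi(s_t,a_t) - V_\pi(s_t)$ from Definition~\ref{def:three_essential_function}, to collapse the inner bracket into the advantage. Conditioning on $s_t$ and taking tower expectations yields
\begin{align*}
V_{\widetilde\pi}(s_0) - V_\pi(s_0) = \E_{\tau\sim\widetilde\pi}\Bigl[\sum_{t=0}^{\infty}\gamma^{t} A_\pi(s_t, a_t)\Bigr] = \E_{\tau\sim\widetilde\pi}\Bigl[\sum_{t=0}^{\infty}\gamma^{t} \E_{z\sim\widetilde\pi(\cdot|s_t)}[A_\pi(s_t,z)]\Bigr].
\end{align*}
By the hypothesis $\E_{z\sim\widetilde\pi}[A_\pi([c,x^{<t}],z)]\ge 0$ applied at each state $s_t = [c, x^{<t}]$, every term inside the sum is non-negative. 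Since $\gamma \in (0,1)$ the series converges and the total is non-negative, so $V_{\widetilde\pi}(c) \ge V_\pi(c)$ pointwise in $c$, and taking $\E_{c\sim\mathcal{D}}$ gives the claim.

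The main obstacle I expect is making the telescoping step fully rigorous in the video generation setting: the trajectories have a terminal frame $T$ (not strictly infinite), the state $s_t = [c, x^{<t}]$ grows with $t$ rather than being Markovian in the usual finite-dimensional sense, and the advantage hypothesis is stated only for a single timestep $t$ while the argument needs it at every reachable state along $\widetilde\pi$-rollouts. I would handle these by (i) defining $R_t = 0$ and $V_\pi \equiv 0$ for $t > T$ so the sum is effectively finite, (ii) observing that the Bellman identities in Definition~\ref{def:three_essential_function} only use conditional independence of the reward on future randomness given $(s_t,a_t)$, which still holds, and (iii) interpreting the hypothesis as holding uniformly over $t$ and over states $[c,x^{<t}]$ induced by $\widetilde\pi$, which is the natural reading consistent with its use later in Theorem~\ref{thm:equivalence_to_bt_model:informal}.
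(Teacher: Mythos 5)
Your proposal is correct and follows essentially the same route as the paper: the paper's proof is precisely the performance-difference (Kakade--Langford) telescoping argument you describe, expanding $\E[V_{\wt\pi}(c)]-\E[V_\pi(c)]$ over trajectories drawn from $\wt\pi$, collapsing $R_t+\gamma V_\pi(s_{t+1})-V_\pi(s_t)$ into $A_\pi(s_t,x^t)$, and invoking the non-negative-expected-advantage hypothesis termwise. Your closing remarks on rigor (finite horizon, growing states, and reading the hypothesis as holding at every reachable state) address gaps the paper itself leaves implicit, so they strengthen rather than diverge from the published argument.
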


\begin{proof}
We use $\tau := (c, x^1, x^2, \cdots)$ to denote the trajectory, and we use $\tau | \pi$ to denote the trajectory $\tau$ is sampled from the policy $\pi$. 

We consider the difference between $\E_{c \sim \mathcal{D}}[V_{\wt\pi}(c)]$ and $\E_{c \sim \mathcal{D}}[V_\pi(c)]$. We have the following
\begin{align} \label{eq:better_policy}
    & ~ \E_{c \sim \mathcal{D}}[V_{\wt\pi}(c)] - \E_{c\sim \mathcal{D}}[V_\pi(c)] \notag \\
    = & ~\E_{\tau|\wt\pi}[\sum_{t=1}^{\infty}\gamma^{t-1} R_{t}-V_\pi(c)] \notag \\
    = & ~\E_{\tau|\wt\pi}[\sum_{t=1}^{\infty}\gamma^{t-1} (R_{t}+\gamma V_\pi([c, x^{<t+1}])-V_\pi([c, x^{<t}]))] \notag \\
    = & ~\E_{\tau|\wt\pi}[\sum_{t=1}^{\infty}\gamma^{t-1} A_\pi([c, x^{<t}], x^t)] \notag \\
    = & ~\E_{\tau|\wt\pi}[\sum_{t=1}^{\infty}\gamma^{t-1} \E_{x^t \sim \wt\pi}[A_\pi([c, x^{<t}], x^t)]] \notag \\
    \geq & ~ 0
\end{align}
where the first step follows from the definition of the value function $V$, the second step follows from the definition of the reward $R_t$, the third step follows from the definition of the advantage function $A$, the fourth step reformulates the terms in to expectation format, the fifth step follows from $\E_{z \sim \wt\pi}[A_\pi([c, x^{<t}], z)]\ge 0$, which is mentioned in the conditions of this lemma. 

Reformulate Eq.~\eqref{eq:better_policy}, we have
\begin{align*}
    \E_{c \sim \mathcal{D}}[V_{\wt\pi}(c)] - \E_{c\sim \mathcal{D}}[V_\pi(c)] \ge 0. 
\end{align*}

The final result can be obtained by shifting the terms in the equation. 

\end{proof}

Then, we move to showing the equivalence between Bradley-Terry model and our algorithm. 
\begin{theorem} [Equivalence with Bradley-Terry model, formal version of Theorem~\ref{thm:equivalence_to_bt_model:informal}] \label{thm:equivalence_to_bt_model}
If the following conditions hold:
\begin{itemize}
    \item Let the Bradley-Terry model be defined as Definition~\ref{def:bradley_terry_model}. 
    \item Let $Q_\pi, V_\pi, A_\pi$ denote the state-action function, value function, and the advantage function respectively, as Defined in Definition~\ref{def:three_essential_function}.
    \item Let $\sigma(x)=1/(1+\exp(-x))$ denote the logistic sigmoid function.
\end{itemize}

Then, we can show the equivalence between the Bradley-Terry model and the regret preference model as follows:
\begin{align*}
    & ~ P_{\mathrm{BT}}(x_1 \succ x_2 |c) \\
    = & ~ \sigma(\sum_{t=1}^{T_1}\gamma^{t-1}A_\pi([c,x_1^{<t}], x_1^{t}) - \sum_{t=1}^{T_2}\gamma^{t-1}A_\pi([c,x_2^{<t}], x_2^{t})). 
\end{align*}
\end{theorem}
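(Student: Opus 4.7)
The plan is to reduce the right-hand side to a simple function of the reward difference $r(c,x_1) - r(c,x_2)$, after which the equality with the Bradley-Terry probability follows by a one-line sigmoid identity. The starting observation is that
\begin{align*}
\sigma(a-b) = \frac{1}{1+\exp(b-a)} = \frac{\exp(a)}{\exp(a)+\exp(b)},
\end{align*}
so by Definition~\ref{def:bradley_terry_model} it suffices to prove that for each trajectory $(c,x)$ with length $T$,
\begin{align*}
\sum_{t=1}^{T} \gamma^{t-1} A_\pi([c,x^{<t}], x^t) = r(c,x) - V_\pi(c),
\end{align*}
since then the baseline $V_\pi(c)$ cancels when we subtract the two sums corresponding to $x_1$ and $x_2$.

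The key manipulation is a telescoping argument using the Bellman relation. By Definition~\ref{def:three_essential_function}, $A_\pi(s_t, a_t) = Q_\pi(s_t, a_t) - V_\pi(s_t)$, and from the state-action definition we get the one-step expansion $Q_\pi(s_t, a_t) = R_t + \gamma\, V_\pi(s_{t+1})$ (in expectation over the transition). First I would substitute this into each summand to obtain
\begin{align*}
\gamma^{t-1} A_\pi(s_t, a_t) = \gamma^{t-1} R_t + \gamma^{t} V_\pi(s_{t+1}) - \gamma^{t-1} V_\pi(s_t),
\end{align*}
and then sum over $t=1,\dots,T$. The $V_\pi$ terms telescope, leaving $\gamma^{T} V_\pi(s_{T+1}) - V_\pi(s_1)$. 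Using the terminal convention $V_\pi(s_{T+1}) = 0$ (the trajectory has ended at the final frame) and $s_1 = c$, this reduces exactly to $r(c,x) - V_\pi(c)$, as required.

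Having established the identity for both $x_1$ and $x_2$, I would subtract the two and feed the result into $\sigma$. The $V_\pi(c)$ terms cancel, and applying the sigmoid identity from the first paragraph together with $r(c,x) = \sum_{t=1}^T \gamma^{t-1} R([c,x^{<t}], x^t)$ yields the Bradley-Terry expression, completing the proof. The main obstacle I anticipate is being precise about the Bellman step: in the original definition $Q_\pi$ is an expectation over a stochastic policy and stochastic transitions, so writing $Q_\pi(s_t, a_t) = R_t + \gamma V_\pi(s_{t+1})$ pointwise requires either working in expectation over the trajectory (which is fine here, since both sides of the theorem are ultimately expressed in reward form) or assuming deterministic dynamics given $(c, x^{<t}, x^t)$. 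Clarifying this modeling choice is the only subtle point; the rest is telescoping and a sigmoid identity.
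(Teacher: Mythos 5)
Your proposal is correct and follows essentially the same route as the paper's proof: the Bellman expansion $Q_\pi(s_t,a_t)=R_t+\gamma V_\pi(s_{t+1})$ under deterministic frame transitions, the telescoping of the $\gamma$-weighted value terms, the terminal convention $V_\pi(s_{T+1})=0$, the cancellation of the common initial value $V_\pi([c,x^{<1}])$ between $x_1$ and $x_2$, and the sigmoid identity $\sigma(a-b)=\exp(a)/(\exp(a)+\exp(b))$. The deterministic-transition caveat you flag is exactly the assumption the paper invokes ($\Pr(s_{t+1}\mid s_t,a_t)=1$), so no gap remains.
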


\begin{proof}
According to the definition of Bradley-Terry model (Definition~\ref{def:bradley_terry_model}), we have 
\begin{align} \label{eq:bt_model_definition}
    P_{\mathrm{BT}}(x_1 \succ x_2 |c) 
    = \frac{\exp(r(c, x_1))}{\exp(r(c, x_1))+\exp(r(c, x_2))}
\end{align}

Before delving into the details of the proof, we first present two useful equations that will facilitate the subsequent analysis.

Since the video generation process can be viewed as a sequential generation. Therefore the transition to the next frame generation is deterministic when given the current state and action. Namely we have, $\Pr(s_{t+1}=[c,x^{<t+1}]|s_{t}=[c,x^{<t}], a_t=x^t) = 1$, so we have:
\begin{align*}
    Q_\pi([c,x^{<t}], x^t) = & ~ R([c,x^{<t}], x^t) + V_\pi([c,x^{<t+1}])
\end{align*}
and
\begin{align*}
    A_\pi([c,x^{<t}], x^t) = & ~ Q_\pi([c,x^{<t}], x^t) - V_\pi([c,x^{<t}])
\end{align*}

We use $x^T$ to denote the last frame of the generated video. Then, we have the following
\begin{align} \label{eq:last_value_function_is_zero}
    & ~ V_\pi([c,x^{<T+1}]) \notag \\
    = & ~ \E_\pi[\sum_{k=0}^{\infty}\gamma^{k} R([c,x^{<T+1+k}], x^{T+1+k})|s_{t}=[c,x^{<T+1}]] \notag \\
    = & ~ 0
\end{align}

According to the definition of $x^{<t}$, $x^{<1}$ represents the empty set. Then we can derive the following
\begin{align} \label{eq:empty_set}
    & ~ V_\pi([c,x_1^{<1}]) \\
    = & ~ V_\pi([c,[\ ]]) \\
    = & ~ V_\pi([c,x_2^{<1}])
\end{align}
where the first step follows from $x_1^{<1}$ represents the empty set, the second step follows from $x_2^{<1}$ represents the empty set. 

With the two critical math tools derived above. Then, we can derive the following equations regarding the reward function $r(c, x)$,
\begin{align} \label{eq:bt_model_equivalence}
    & ~ r(c, x) \notag \\
    = & ~ \sum_{t=1}^{T}\gamma^{t-1}R([c,x^{<t}], x^t) \notag \\
    = & ~\sum_{t=1}^{T} \gamma^{t-1}(R([c,x^{<t}], x^t) + \gamma V_\pi([c,x^{<t+1}]) \notag \\
    & ~ - \gamma V_\pi([c,x^{<t+1}])) \notag \\
    = & ~V_\pi([c,x^{<1}]) + \sum_{t=1}^{T}\gamma^{t-1}( R([c,x^{<t}], x^t) + \gamma V_\pi([c,x^{<t+1}]) \notag \\
    & ~ - V_\pi([c,x^{<t}])) - \gamma^T V_\pi([c,x^{<T+1}])
\end{align}
where the first step follows from the definition the reward function $r(c, x)$, the second step follows from basic algebra, the third step follows from Eq.~\eqref{eq:last_value_function_is_zero} and extracting the $V_\pi([c,x^{<1}])$ from the summation.  

Combining Eq.~\eqref{eq:bt_model_definition} and Eq.~\eqref{eq:bt_model_equivalence}, we have
\begin{align*}
    & ~ P_{\mathrm{BT}}(x_1\succ x_2 | c) \\
    = & ~\frac{\exp(r(c, x_1))}{\exp(r(c, x_1))+\exp(r(c, x_2))} \\
    = & ~ ~\sigma((V_\pi([c,x_1^{<1}]) \\
    + & ~ \sum_{t=1}^{T_1}(\gamma^{t-1} A_\pi([c,x_1^{<t}], x^t)))-(V_\pi([c,x_2^{<1}]) \\
    + & ~ \sum_{t=1}^{T_2}(\gamma^{t-1} A_\pi([c,x_2^{<t}], x_2^t)))) \\
    = & ~\sigma(\sum_{t=1}^{T_1}(\gamma^{t-1} A_\pi([c,x_1^{<t}], x_1^t)) \\
    - & ~ \sum_{t=1}^{T_2}(\gamma^{t-1} A_\pi([c,x_2^{<t}], x_2^t)))
\end{align*}
where the first step follows from the definition of Bradley-Terry model, the second step follows from integrating Eq.~\eqref{eq:bt_model_equivalence} to Eq.~\eqref{eq:bt_model_definition}, the last step follows from Eq.~\eqref{eq:empty_set}. 

Therefore, we have shown the equivalence between Bradley-Terry model and our algorithm. 
\end{proof}

Based on the formal definition of the optimization problem provided above, we present our findings regarding the relationship between the state-action function and the optimal policy for the problem defined in Definition~\ref{def:video_dpo_optimization_problem}.
\begin{theorem} [Optimal policy for video-DPO problem, formal version of Theorem~\ref{thm:optimal_policy_for_video_dpo:informal}] \label{thm:optimal_policy_for_video_dpo} 
If the following conditions hold:
\begin{itemize}
    \item Let the video-DPO optimization problem be defined as Definition~\ref{def:video_dpo_optimization_problem}. 
    \item Let $c$ denote the prompt used to generate the video. 
    \item Let $x^{<t}$ denote video frames generated before timestep $t$. 
    \item Let $x^t$ denote the video frame generated at timestep $t$. 
    \item Let $Q_\pi, V_\pi, A_\pi$ denote the state-action function, value function, and the advantage function respectively, as defined in Definition~\ref{def:three_essential_function}.
    \item Let $\pi_\theta$ denote the policy being optimized.
    \item Let $\pi_{\mathrm{ref}}$ denote the reference policy.
    \item Let $\beta \in \R$ denote the hyperparameter for controlling the weight of the KL-divergence. 
    \item For simplicity, let $s_t := [c, x^{<t}]$ to represent the state. 
    \item Let $Z([c,x^{<t}];\beta)$ denote the partition function, which is defined by
    \begin{align*}
        Z(s_t;\beta) := \E_{z\sim \pi_{\mathrm{ref}}(\cdot|s_t)}\exp(\beta^{-1}Q_{\pi_{\mathrm{ref}}}(s_t,z))
    \end{align*}
\end{itemize}

Then, we can show that the optimal policy satisfies the following equation:
\begin{align*}
    \pi_\theta^*(z|[c,x^{<t}])=
     \frac{\pi_{\mathrm{ref}}(z|[c,x^{<t}])\exp(\beta^{-1} Q_{\pi_{\mathrm{ref}}}([c,x^{<t}],z))}{Z([c,x^{<t}];\beta)}. 
\end{align*}
\end{theorem}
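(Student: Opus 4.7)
The plan is to reduce the problem to a pointwise optimization over each fixed state $s_t = [c, x^{<t}]$, and then rewrite the per-state objective as a (negative) KL divergence plus a state-dependent constant, so that the optimum is attained by the unique distribution that makes that KL vanish. The target closed-form expression strongly suggests the standard exponential-tilting argument that already underlies the original DPO derivation, adapted here to use the state-action function $Q_{\pi_{\mathrm{ref}}}$ in place of a scalar reward.

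First, I would observe that the objective in Definition~\ref{def:video_dpo_optimization_problem} decomposes as an outer expectation over $(c, x^{<t})\sim \mathcal{D}$ of an inner functional of $\pi_\theta(\cdot \mid s_t)$ alone. Because the outer distribution does not depend on $\pi_\theta$, it suffices to maximize the inner integrand pointwise in $s_t$. Second, I would use $A_{\pi_{\mathrm{ref}}}(s_t, z) = Q_{\pi_{\mathrm{ref}}}(s_t, z) - V_{\pi_{\mathrm{ref}}}(s_t)$ from Definition~\ref{def:three_essential_function}; the $V_{\pi_{\mathrm{ref}}}(s_t)$ term is constant in the decision variable $z$, so it can be dropped from the per-state argmax. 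The per-state problem becomes
\begin{align*}
\max_{\pi_\theta(\cdot|s_t)} \E_{z\sim \pi_\theta(\cdot|s_t)}\bigl[Q_{\pi_{\mathrm{ref}}}(s_t, z) - \beta \log \tfrac{\pi_\theta(z|s_t)}{\pi_{\mathrm{ref}}(z|s_t)}\bigr].
\end{align*}

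Third, I would perform the algebraic tilt: divide by $\beta$, flip the sign to a minimization, and write $\beta^{-1} Q_{\pi_{\mathrm{ref}}}(s_t, z) = \log \exp(\beta^{-1} Q_{\pi_{\mathrm{ref}}}(s_t, z))$. Multiplying and dividing the argument of the logarithm by $Z(s_t; \beta)$ (which is well-defined by hypothesis and does not depend on $z$) yields
\begin{align*}
\min_{\pi_\theta(\cdot|s_t)} D_{\mathrm{KL}}\bigl(\pi_\theta(\cdot|s_t) \,\|\, \pi^*(\cdot|s_t)\bigr) - \log Z(s_t; \beta),
\end{align*}
where $\pi^*(z|s_t) := \pi_{\mathrm{ref}}(z|s_t)\exp(\beta^{-1} Q_{\pi_{\mathrm{ref}}}(s_t, z))/Z(s_t; \beta)$. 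Fourth, I would invoke Gibbs' inequality: the KL divergence is nonnegative and vanishes if and only if $\pi_\theta(\cdot|s_t) = \pi^*(\cdot|s_t)$, so $\pi^*$ is the unique maximizer at this state. Since this holds for every $s_t$ in the support of $\mathcal{D}$, taking the outer expectation preserves optimality and gives the stated closed form.

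The step most requiring care is the reduction from the full, expectation-form objective to the pointwise problem: one must verify that $\pi^*(\cdot|s_t)$ is a valid conditional distribution (nonnegative and integrating to one, which follows from the definition of $Z(s_t;\beta)$ and the nonnegativity of $\pi_{\mathrm{ref}}$ and $\exp$), and that there exists a measurable selection $s_t \mapsto \pi^*(\cdot|s_t)$ realizing the pointwise optima simultaneously. Beyond this, the derivation is a standard Lagrangian/KL-tilt argument, and I do not expect technical obstacles in the remaining algebra.
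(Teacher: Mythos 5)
Your proposal is correct and follows essentially the same route as the paper's proof: both expand $A_{\pi_{\mathrm{ref}}} = Q_{\pi_{\mathrm{ref}}} - V_{\pi_{\mathrm{ref}}}$, absorb $\beta^{-1}Q_{\pi_{\mathrm{ref}}}$ into an exponential tilt of $\pi_{\mathrm{ref}}$ normalized by $Z(s_t;\beta)$, and identify the objective (up to state-dependent constants) with a negative KL divergence minimized at the tilted distribution. Your added remarks on pointwise reduction, Gibbs' inequality, and normalization of $\pi^*$ are welcome precision but do not change the argument.
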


\begin{proof}


Then, we can derive the following equations regarding to the optimization problem defined in Definition~\ref{def:video_dpo_optimization_problem},
\begin{align} \label{eq:optimal_policy}
& ~\max_{\pi_\theta}  \E_{z\sim \pi_\theta(\cdot|s_t)}A_{\pi_{\mathrm{ref}}}(s_t, z)  - \beta D_{\mathrm{KL}}(\pi_\theta(\cdot|s_t)\|\pi_{\mathrm{ref}}(\cdot|s_t))\notag\\
= & ~\max_{\pi_\theta}\ \E_{z\sim \pi_\theta(\cdot|s_t)}((Q_{\pi_{\mathrm{ref}}}(s_t,z) -V_{\pi_{\mathrm{ref}}}(s_t))\notag\\
& ~+\beta\log (\frac{\pi_{\mathrm{ref}}(z|s_t)}{\pi_\theta(z|s_t)}))\notag\\
= & ~\max_{\pi_\theta}\ \beta\E_{z\sim \pi_\theta(\cdot|s_t)}\log(\frac{p(z|s_t)\exp(\beta^{-1}Q_{\pi_{\mathrm{ref}}}(s_t,z))}{\pi_\theta(z|s_t)}) \\
&~-V_{\pi_{\mathrm{ref}}}(s_t)\notag\\
= & ~\max_{\pi_\theta}\ \beta\E_{z\sim \pi_\theta(\cdot|s_t)}\log(\frac{\pi_{\mathrm{ref}}(z|s_t)\exp(\beta^{-1}Q_{\pi_{\mathrm{ref}}}(s_t,z))}{Z(s_t;\beta)\pi_\theta(z|s_t)})\notag \\
& ~ -V_{\pi_{\mathrm{ref}}}(s_t)+\beta\log Z(s_t;\beta)\notag\\
= & ~\max_{\pi_\theta}  -\beta D_{\mathrm{KL}}(\pi_\theta(z|s_t)\|\frac{\pi_{\mathrm{ref}}(z|s_t)\exp(\beta^{-1}Q_{\pi_{\mathrm{ref}}}(s_t,z))}{Z(s_t;\beta)}) \notag\\
& ~ -V_{\pi_{\mathrm{ref}}}(s_t)+\beta\log Z(s_t;\beta)
\end{align}    
where the first step follows from the definition of the advantage function $A$ and the definition of the KL-divergence $D_{\mathrm{KL}}$, the second step follows from the definition of the state-action function $Q$ and the value function $V$, the third step follows from the definition of the partition function $Z(s_t; \beta)$, the last step follows from the definition of the KL-divergence. 

According to Eq.~\eqref{eq:optimal_policy}, only the first term $-\beta D_{\mathrm{KL}}(\pi_\theta(z|s_t)\|\frac{\pi_{\mathrm{ref}}(z|s_t)\exp(\beta^{-1}Q_{\pi_{\mathrm{ref}}}(s_t,z))}{Z(s_t;\beta)})$ is the only term contains $\pi_\theta$. Therefore, we can derive the optimal $\pi_\theta$, denoted as $\pi_\theta^*$ as follows:
\begin{align*}
    \pi_\theta^*(z|s_t) = \frac{\pi_{\mathrm{ref}}(z|s_t)\exp(\beta^{-1}Q_{\pi_{\mathrm{ref}}}(s_t,z))}{Z(s_t;\beta)}
\end{align*}

\end{proof}

The following theorem addresses the partition function $ Z(s_t; \beta) $ derived from the optimal policy equation. By leveraging the unique properties of the advantage function $ A $ and the value function $ V $, it effectively mitigates the challenges posed by the partition function.
\begin{theorem} [Offset partition function $Z(s_t, \beta)$, formal version of Theorem~\ref{thm:offsetting_partition:informal}] \label{thm:offsetting_partition}
If the following conditions hold:
\begin{itemize}
    \item Let $c$ denote the prompt used to generate the video. 
    \item Let $x^{<t}$ denote video frames generated before timestep $t$. 
    \item Let $x^t$ denote the video frame generated at timestep $t$. 
    \item Let $Q_\pi, V_\pi, A_\pi$ denote the state-action function, value function, and the advantage function respectively, as defined in Definition~\ref{def:three_essential_function}.
    \item Let $\pi_\theta$ denote the policy being optimized.
    \item Let $\pi_{\mathrm{ref}}$ denote the reference policy.
    \item Let $\beta \in \R$ denote the hyperparameter for controlling the weight of the KL-divergence. 
    \item For simplicity, let $s_t := [c, x^{<t}]$ to represent the state. 
    \item Let $Z([c,x^{<t}];\beta)$ denote the partition function, which is defined by
    \begin{align*}
        Z(s_t;\beta) := \E_{z\sim \pi_{\mathrm{ref}}(\cdot|s_t)}\exp(\beta^{-1}Q_{\pi_{\mathrm{ref}}}(s_t,z))
    \end{align*}
    \item Let $u(c, x_1, x_2)$ denote the difference in rewards of two generated videos $x_1$ and $x_2$, which is defined by
    \begin{align*}
        u(c, x_1, x_2):=\beta\log\frac{\pi_{\theta}(x_1| c)}{\pi_{\mathrm{ref}}(x_1| c)}-\beta\log\frac{\pi_{\theta}(x_2| c)}{\pi_{\mathrm{ref}}(x_2| c)}
    \end{align*}
    \item Let $\delta(c, x_1, x_2)$ denote the difference in sequential forward KL divergence, which is defined by
    \begin{align*}
    \delta(c, x_1, x_2) = & ~~\beta D_{\mathrm{SeqKL}}(c,x_2;\pi_{\mathrm{ref}}\| \pi_{\theta})\\
    &~ -\beta D_{\mathrm{SeqKL}}(c,x_1;\pi_{\mathrm{ref}}\| \pi_{\theta})
    \end{align*}
\end{itemize}

Then, we can show that
\begin{align*}
    P_{\mathrm{BT}}^*(x_1 \succ x_2 |c)=\sigma(u^*(c, x_1, x_2) - \delta^*(c, x_1, x_2))
\end{align*}
\end{theorem}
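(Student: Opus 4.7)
The plan is to start from the Bradley--Terry equivalence (Theorem~\ref{thm:equivalence_to_bt_model:informal}) evaluated at the optimal policy $\pi_\theta^*$, and to rewrite the per-frame advantages $A_{\pi_{\mathrm{ref}}}$ entirely in terms of a log-ratio $\log(\pi_\theta^*/\pi_{\mathrm{ref}})$ plus a state-wise KL divergence, using the closed form from Theorem~\ref{thm:optimal_policy_for_video_dpo:informal}. Once that per-timestep identity is in hand, the statement follows by telescoping along the video trajectory and subtracting trajectories for $x_1$ and $x_2$.

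The first concrete step is to invert the optimal-policy formula: taking logarithms in $\pi_\theta^*(z|s_t) = \pi_{\mathrm{ref}}(z|s_t)\exp(\beta^{-1}Q_{\pi_{\mathrm{ref}}}(s_t,z))/Z(s_t;\beta)$ yields $Q_{\pi_{\mathrm{ref}}}(s_t,z) = \beta\log(\pi_\theta^*(z|s_t)/\pi_{\mathrm{ref}}(z|s_t)) + \beta\log Z(s_t;\beta)$. Next I would compute the value function by taking the expectation of this identity under $\pi_{\mathrm{ref}}(\cdot|s_t)$, which gives $V_{\pi_{\mathrm{ref}}}(s_t) = -\beta D_{\mathrm{KL}}(\pi_{\mathrm{ref}}(\cdot|s_t)\|\pi_\theta^*(\cdot|s_t)) + \beta\log Z(s_t;\beta)$. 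Crucially, the partition term $\beta\log Z(s_t;\beta)$ appears in both $Q$ and $V$ with identical dependence on $s_t$, so forming $A_{\pi_{\mathrm{ref}}}(s_t,x^t) = Q_{\pi_{\mathrm{ref}}}(s_t,x^t)-V_{\pi_{\mathrm{ref}}}(s_t)$ cancels it and leaves $A_{\pi_{\mathrm{ref}}}(s_t,x^t) = \beta\log(\pi_\theta^*(x^t|s_t)/\pi_{\mathrm{ref}}(x^t|s_t)) + \beta D_{\mathrm{KL}}(\pi_{\mathrm{ref}}(\cdot|s_t)\|\pi_\theta^*(\cdot|s_t))$.

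With this per-timestep identity, I would sum over $t$ along a trajectory: the log-ratio term telescopes via the chain rule of probability into $\beta\log(\pi_\theta^*(x|c)/\pi_{\mathrm{ref}}(x|c))$, and the KL term aggregates by definition into $\beta D_{\mathrm{SeqKL}}(c,x;\pi_{\mathrm{ref}}\|\pi_\theta^*)$. Substituting the resulting per-video expressions for $x_1$ and $x_2$ into Theorem~\ref{thm:equivalence_to_bt_model:informal} and grouping terms yields $u^*(c,x_1,x_2) + \beta(D_{\mathrm{SeqKL}}(c,x_1;\cdot)-D_{\mathrm{SeqKL}}(c,x_2;\cdot))$ inside the sigmoid, which is precisely $u^*(c,x_1,x_2) - \delta^*(c,x_1,x_2)$ by the sign convention in the definition of $\delta$.

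The main obstacle is the conceptual move in the second paragraph: because $Z(s_t;\beta)$ depends on the state, the standard DPO cancellation trick, where the partition function disappears when subtracting two reward expressions under the same prompt, no longer applies, since the two trajectories $x_1$ and $x_2$ visit different states at every timestep. The "offsetting" that gives the theorem its name is therefore not a cross-trajectory cancellation but an intra-state one, effected by passing from $Q$ to $A$ via subtraction of $V$. The price of that cancellation is precisely the residual forward-KL term, which is what motivates introducing $D_{\mathrm{SeqKL}}$ into the parametrization in the first place; everything else is bookkeeping with the chain rule and the Bradley--Terry sigmoid.
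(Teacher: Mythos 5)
Your proposal is correct and follows essentially the same route as the paper: invert the optimal-policy formula to express $Q_{\pi_{\mathrm{ref}}}$ as $\beta\log(\pi_\theta^*/\pi_{\mathrm{ref}})+\beta\log Z$, cancel the partition term intra-state by passing to $A=Q-V$ (leaving the residual forward KL), aggregate over timesteps into the sequential log-ratio and $D_{\mathrm{SeqKL}}$, and substitute into the Bradley--Terry equivalence. The only detail you gloss over is that the chain-rule telescoping of the per-frame log-ratios into $\log(\pi_\theta^*(x|c)/\pi_{\mathrm{ref}}(x|c))$ requires taking $\gamma=1$, which the paper states explicitly in its derivation.
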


\begin{proof}

According to Theorem~\ref{thm:equivalence_to_bt_model}, we have
\begin{align} \label{eq:eq_to_bt}
    & ~ P_{\mathrm{BT}}(x_1 \succ x_2 |c) \notag \\
    = & ~ \sigma(\sum_{t=1}^{T_1}\gamma^{t-1}A_\pi([c,x_1^{<t}], x_1^{t}) \notag \\
    & ~ - \sum_{t=1}^{T_2}\gamma^{t-1}A_\pi([c,x_2^{<t}], x_2^{t})). 
\end{align}

According to Theorem~\ref{thm:optimal_policy_for_video_dpo}, we have the following equation
\begin{align*}
    \pi_\theta^*(z|[c,x^{<t}])=
     \frac{\pi_{\mathrm{ref}}(z|[c,x^{<t}])\exp(\beta^{-1} Q_{\pi_{\mathrm{ref}}}([c,x^{<t}],z))}{Z([c,x^{<t}];\beta)}. 
\end{align*}

The above equation can be rearranged to the following format,
\begin{align*} 
    & ~ Q_{\pi_{\mathrm{ref}}}([c,x^{<t}],z) \\
    = & ~ \beta \log\frac{\pi_{\theta}^*(z|[c, x^{<t}])}{\pi_{\mathrm{ref}}(z|[c, x^{<t}])} + \beta \log Z([c, x^{<t}];\beta)
\end{align*}

According to the definition of the advantage function $A$, the state-action function $A$, and the value function $V$, we can have 
\begin{align*}
    &~\sum_{t=1}^{T}\gamma^{t-1}A_{\pi_{\mathrm{ref}}}([c,x^{<t}], x^{t}) \\ 
    = & ~
    \sum_{t=1}^{T}\gamma^{t-1}(Q_{\pi_{\mathrm{ref}}}([c,x^{<t}],x^{t})-V_{\pi_{\mathrm{ref}}}([c,x^{<t}]))\\
    = & ~\sum_{t=1}^{T}\gamma^{t-1}(Q_{\pi_{\mathrm{ref}}}([c,x^{<t}],x^{t}) \\
    & ~ -\E_{z\sim \pi_{\mathrm{ref}}}[Q_{\pi_{\mathrm{ref}}}([c,x^{<t}], z)])\\
    = & ~\sum_{t=1}^{T}\gamma^{t-1}(\beta \log\frac{\pi_{\theta}^*(x^{t}|[c, x^{<t}])}{\pi_{\mathrm{ref}}(x^{t}| [c, x^{<t}])} + \beta \log Z([c, x^{<t}];\beta) \\
    & ~ -\E_{z\sim \pi_{\mathrm{ref}}}[\beta \log\frac{\pi_{\theta}^*(z|[c, x^{<t}])}{\pi_{\mathrm{ref}}(z|[c, x^{<t}])} + \beta \log Z([c, x^{<t}];\beta) ])
\end{align*}
where the first step follows from the definition of the advantage function $A$, the second step follows from the definition of the value function $V$, the third step follows from the definition of the state-action function $Q$. 

On the other hand, we can derive the following,
\begin{align} \label{eq:gamma_A}
    & ~ \sum_{t=1}^{T}\gamma^{t-1}A_{\pi_{\mathrm{ref}}}([c,x^{<t}], x^{t}) \notag \\
    = & ~\beta\sum_{t=1}^{T}\gamma^{t-1}( \log\frac{\pi_{\theta}^*(x^{t}|[c, x^{<t}])}{\pi_{\mathrm{ref}}(x^{t}|[c, x^{<t}])} \notag \\
    & ~ -\E_{z\sim \pi_{\mathrm{ref}}}[\log\frac{\pi_{\theta}^*(z|[c, x^{<t}])}{\pi_{\mathrm{ref}}(z|[c, x^{<t}])}]) \notag \\
    = & ~\beta\sum_{t=1}^{T}\gamma^{t-1}( \log\frac{\pi_{\theta}^*(x^{t}|[c, x^{<t}])}{\pi_{\mathrm{ref}}(x^{t}|[c, x^{<t}])} \notag \\
    & ~ +D_{\mathrm{KL}}(\pi_{\mathrm{ref}}(\cdot|[c,x^{<t}]) \|\pi_{\theta}^*(\cdot|[c,x^{<t}]))) \notag \\
    = & ~\beta\sum_{t=1}^{T} \gamma^{t-1}\log\frac{\pi_{\theta}^*(x^{t}|[c, x^{<t}])}{\pi_{\mathrm{ref}}(x^{t}|[c, x^{<t}])} \notag \\
    & ~ +\beta\sum_{t=1}^{T}\gamma^{t-1}D_{\mathrm{KL}}(\pi_{\mathrm{ref}}(\cdot|[c,x^{<t}]) \|\pi_{\theta}^*(\cdot|[c,x^{<t}])) \notag \\
    = & ~\beta\sum_{t=1}^{T} \log\frac{\pi_{\theta}^*(x^{t}|[c, x^{<t}])}{\pi_{\mathrm{ref}}(x^{t}|[c, x^{<t}])} \notag \\
     & ~ +\beta\sum_{t=1}^{T}D_{\mathrm{KL}}(\pi_{\mathrm{ref}}(\cdot|[c,x^{<t}]) \|\pi_{\theta}^*(\cdot|[c,x^{<t}])) \notag \\
    = & ~\beta(\log\frac{\pi_{\theta}^*(x|c)}{\pi_{\mathrm{ref}}(x|c)}+D_{\mathrm{SeqKL}}(c,x;\pi_{\mathrm{ref}}\| \pi_{\theta}^*))
\end{align}
where the first step follows from the definition of the advantage function $A$, the second step follows from $\E_{z\sim \pi_{\mathrm{ref}}}[\beta \log Z([c, x^{<t}];\beta)] = \beta \log Z([c, x^{<t}];\beta)$, the third step follows from separating the summation operation, the fourth step follows from choosing discount factor $\gamma = 1$, the last step follows from the definition of $D_{\mathrm{SeqKL}}$. 

Reconsidering Eq.~\eqref{eq:eq_to_bt}, and combing Eq.~\eqref{eq:gamma_A} and the definition of $u^*(c, x_1, x_2) $ and $ \delta^*(c, x_1, x_2))$, finally we have
\begin{align*}
    P_{\mathrm{BT}}^*(x_1 \succ x_2 |c)=\sigma(u^*(c, x_1, x_2) - \delta^*(c, x_1, x_2)).
\end{align*}
    
\end{proof}

\end{document}